\newcommand{\U}{\mathcal{U}}
\newcommand{\V}{\mathcal{V}}
\newcommand{\W}{\mathcal{W}}
\newtheorem{theorem}{Theorem}[section]
\newtheorem{lemma}[theorem]{Lemma}
\newtheorem{corollary}[theorem]{Corollary}
\newtheorem{remark}[theorem]{Remark}
\newtheorem{definition}[theorem]{Definition}
\newtheorem{property}{Property}
   \newcommand{\reals}{\mathbb{R}}
   \newcommand{\naturals}{\mathbb{N}}
   \newcommand{\Ex}{\mathbb{E}}
   \renewcommand{\Pr}{\mathbb{P}}
   \newcommand{\Lo}[1]{{\mathcal L_{#1}}}
   \newcommand{\bLo}[1]{{\mathcal L^{0/1}_{#1}}}
   \newcommand{\rLo}[2]{{\mathcal L^{#1}_{#2}}}
   \newcommand{\lo}{\ell}
   \newcommand{\blo}{\ell^{0/1}}
   \newcommand{\rlo}[1]{\ell^{#1}}
    \newcommand{\indct}[1]{\mathds{1}\left[{#1}\right]}
   \newcommand{\B}{{\mathcal B}}
   \renewcommand{\P}{{\mathcal P}}
   \newcommand{\A}{{\mathcal A}}
   \newcommand{\Ocal}{{\mathcal O}}
  \renewcommand{\H}{{\mathcal H}}
  \newcommand{\Acal}{{\mathcal A}}
  \newcommand{\Hcal}{{\mathcal H}}
  \newcommand{\Mcal}{{\mathcal M}}
  \newcommand{\vc}{\mathrm{VC}}
  \newcommand{\e}{\mathrm{e}}
  \renewcommand{\d}{\mathrm{dist}}
  \newcommand{\iid}{i.i.d.~}
\newcommand{\cH}{\mathcal{H}}
\newcommand{\cO}{\mathcal{O}}
\newcommand{\tpas}{TPaS\xspace}
\newcommand{\modify}[1]{#1}
\title{Adversarially Robust Learning with Tolerance}
\author{
Hassan Ashtiani \thanks{McMaster University, \texttt{zokaeiam@mcmaster.ca}. Hassan Ashtiani is also a faculty affiliate at Toronto's Vector Institute and was supported by an NSERC Discovery Grant.} \and
Vinayak Pathak \thanks{Layer 6 AI, \texttt{vinayak@layer6.ai}.} \and
Ruth Urner \thanks{York University, \texttt{ruth@eecs.yorku.ca}. Ruth Urner is also a faculty affiliate at Toronto's Vector Institute and was supported by an NSERC Discovery Grant.}
}
\begin{document}

\maketitle

    \begin{abstract}
We initiate the study of tolerant adversarial PAC-learning with respect to metric perturbation sets. In adversarial PAC-learning, an adversary is allowed to replace a test point $x$ with an arbitrary point in a closed ball of radius $r$ centered at $x$. In the tolerant version, the error of the learner is compared with the best achievable error with respect to a slightly larger perturbation radius $(1+\gamma)r$. This simple tweak helps us bridge the gap between theory and practice and obtain the first PAC-type guarantees for algorithmic techniques that are popular in practice.

Our first result concerns the widely-used ``perturb-and-smooth'' approach for adversarial learning. For perturbation sets with doubling dimension $d$, we show that a variant of these approaches PAC-learns any hypothesis class $\mathcal{H}$ with VC-dimension $v$ in the $\gamma$-tolerant adversarial setting with $O\left(\frac{v(1+1/\gamma)^{O(d)}}{\varepsilon}\right)$ samples. This is in contrast to the traditional (non-tolerant) setting in which, as we show, the perturb-and-smooth approach can provably fail.

Our second result shows that one can PAC-learn the same class using 
$\widetilde{O}\left(\frac{d.v\log(1+1/\gamma)}{\varepsilon^2}\right)$ samples even in the agnostic setting. This result is based on a novel compression-based algorithm, and achieves a linear dependence on the doubling dimension as well as the VC-dimension. This is in contrast to the non-tolerant setting where there is no known sample complexity upper bound that depends polynomially on the VC-dimension.
\end{abstract}

\section{Introduction}

Several empirical studies~\citep{SzegedyZSBEGF13, GoodfellowMP18} have demonstrated that models trained to have a low accuracy on a data set often have the undesirable property that a small perturbation to an input instance can change the label outputted by the model. For most domains this does not align with human perception and thus indicates that the learned models are not representing the ground truth despite obtaining good accuracy on test sets.

The theory of PAC-learning characterizes the conditions under which learning is possible. For binary classification, the following conditions are sufficient: a) unseen data should arrive from the same distribution as training data, and b) the class of models should have a low capacity (as measured, for example, by its VC-dimension). If these conditions are met, an \emph{Empirical Risk Minimizer} (ERM) that simply optimizes model parameters to maximize accuracy on the training set learns successfully. 

Recent work has studied test-time adversarial perturbations under the PAC-learning framework. If an adversary is allowed to perturb data during test time then the conditions above do not hold, and we cannot hope for the model to learn to be robust just by running ERM. Thus, the goal here is to bias the learning process towards finding models where label-changing perturbations are rare. This is achieved by defining a loss function that combines both classification error and the probability of seeing label-changing perturbations, and learning models that minimize this loss on unseen data. It has been shown that even though (robust) ERM can fail in this setting, PAC-learning is still possible as long as we know during training the kinds of perturbations we want to guard against at test time~\citep{MontasserHS19}. This result holds for all perturbation sets.
However, the learning algorithm is significantly more complex than robust ERM and requires a large number of samples (with the best known sample complexity bounds potentially being exponential in the VC-dimension).

We study a \emph{tolerant} version of the adversarially robust learning framework and restrict the perturbations to balls in a general metric space with finite doubling dimension.
We show this slight shift in the learning objective yields significantly improved sample complexity bounds through a simpler learning paradigm than what was previously known. In fact, we show that a version of the common ``perturb-and-smooth'' paradigm successfully PAC-learns any class of bounded VC-dimension in this setting.

{\bf Learning in general metric spaces.}
What kinds of perturbations should a learning algorithm guard against? Any transformation of the input that we believe should not change its label could be a viable perturbation for the adversary to use. The early works in this area considered perturbations contained within a small $\ell_p$-ball of the input. More recent work has considered other transformations such as a small rotation, or translation of an input image~\citep{engstrom2019exploring, fawzi2015manitest, kanbak2018geometric, xiao2018spatially}, or even adding small amounts of fog or snow~\citep{kang2019testing}. It has also been argued that small perturbations in some \emph{feature space} should be allowed as opposed to the input space~\citep{inkawhich2019feature, sabour2016adversarial, xu2020towards, song2018constructing, hosseini2018semantic}. This motivates the study of more general perturbations.

We consider a setting where the input comes from a domain that is equipped with a distance metric and allows perturbations to be within a small metric ball around the input. Earlier work on general perturbation sets (for example,~\citep{MontasserHS19}) considered arbitrary perturbations. In this setting one does not quantify the magnitude of a perturbation and thus cannot talk about small versus large perturbations. Modeling perturbations using a metric space enables us to do that while also keeping the setup general enough to be able to encode a large variety of perturbation sets by choosing appropriate distance functions.

{\bf Learning with tolerance.}
In practice, we often believe that small perturbations of the input should not change its label but we do not know \emph{precisely} what small means. However, in the PAC-learning framework for adversarially robust classification, we are required to define a precise perturbation set and learn a model that has error arbitrarily close to the smallest error that can be achieved with respect to that perturbation set. In other words, we aim
to be arbitrarily close to a target that was picked somewhat arbitrarily to begin with. Due to the uncertainty about the correct perturbation size, it is more meaningful to allow for a wider range of error values. To achieve this, we introduce the concept of tolerance. In the tolerant setting, in addition to specifying a perturbation size $r$, we introduce a tolerance parameter $\gamma$ that encodes our uncertainty about the size of allowed perturbations. Then, for any given $\varepsilon>0$, we aim 
to learn a model whose error with respect to perturbations of size $r$ is at most  $\varepsilon$ more than the smallest error achievable with respect to perturbations of size $r(1+\gamma)$. 

\section{Our results}
In this paper we formalize and initiate the study of the problem of adversarially robust learning in the tolerant setting for general metric spaces and provide two algorithms for the task. Both of our algorithms rely on: 1) modifying the training data by randomly sampling points from the perturbation sets around each data point, and 2) smoothing the output of the model by taking a majority over the labels returned by the model for nearby points.

Our first algorithm starts by modifying the training set by randomly perturbing each training point using a certain distribution (see Section~\ref{sec:erm} for details). It then trains a (non-robust) PAC-learner (such as ERM) on the perturbed training set to find a hypothesis $h$. Finally, it outputs a smooth version of $h$. The smoothing step replaces $h(x)$ at each point $x$ with the a majority label outputted by $h$ on the points around $x$. We show that for metric spaces of a fixed doubling dimension, this algorithm successfully learns in the 
tolerant setting~\modify{assuming tolerant realizability}.

\begin{theorem}[Informal version of Theorem~\ref{thm:tpas_guarantee}]
Let $(X, \d)$ be a metric space with doubling dimension $d$ and $\cH$ a hypothesis class. Assuming~\modify{tolerant} realizability, $\cH$ can be learned tolerantly in the adversarially robust setting using $O\left(\frac{(1+1/\gamma)^{O(d)}\vc(\cH)}{\varepsilon}\right)$ samples, where $\gamma$ encodes the amount of allowed tolerance, and $\varepsilon$ is the desired accuracy.
\end{theorem}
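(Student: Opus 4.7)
The plan is to reduce the tolerant adversarial problem to standard realizable PAC learning by perturbing the training data, and then to turn the resulting pointwise accuracy into robustness at radius $r$ via smoothing. The heart of the argument is that, thanks to the doubling dimension, a small ``smoothing ball'' around any adversarial test point sits inside a slightly larger ``perturbation ball'' around the clean point, with the two measures differing only by a factor $K=(1+1/\gamma)^{O(d)}$.

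First I would fix, for each $x\in X$, two probability measures: $\mu_x$ supported on $B(x,(1+\gamma)r)$ (the perturbation distribution used during training) and $\nu_x$ supported on $B(x,\gamma r)$ (the smoothing distribution used at test time). Using the doubling property -- for instance by placing a $\Theta(\gamma r)$-net and spreading mass uniformly over the net points that fall in the relevant ball -- I would design these measures so that for every pair $x,x^a$ with $\d(x,x^a)\le r$, the measure $\nu_{x^a}$ is absolutely continuous with respect to $\mu_x$ with Radon--Nikodym derivative bounded by $K$. This is the one genuinely metric-geometric ingredient; the rest is a reduction to a standard setting.

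The algorithm I would analyze draws $m$ samples $(x_i,y_i)\sim D$, independently replaces each by $z_i\sim\mu_{x_i}$, feeds $\{(z_i,y_i)\}$ into a realizable PAC-learner to obtain some $h\in\cH$, and returns $\hat h(x)=\mathrm{majority}_{z\sim\nu_x} h(z)$. Under $\gamma$-tolerant realizability there is $h^\star\in\cH$ that is constant and correct on each $B(x,(1+\gamma)r)$, so in particular $h^\star(z_i)=y_i$. The perturbed sample is therefore realizable by $h^\star$, and the standard realizable VC bound will give, with $m=O(\vc(\cH)/\varepsilon_0)$ samples,
\[
\Pr_{x\sim D,\;z\sim\mu_x}\bigl[h(z)\ne h^\star(z)\bigr]\le\varepsilon_0
\]
with high probability.

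The final step converts this into a robust guarantee. If $\hat h(x^a)\ne h^\star(x^a)$ for some $x^a\in B(x,r)$, then -- using that $h^\star$ is constant on $B(x,(1+\gamma)r)\supseteq\supp(\nu_{x^a})$ -- the disagreement set $\{z:h(z)\ne h^\star(z)\}$ carries more than half of the $\nu_{x^a}$-mass, and the Radon--Nikodym bound forces $\eta(x):=\mu_x(\{z:h(z)\ne h^\star(z)\})\ge 1/(2K)$. Markov applied to $\eta$ will then yield
\[
\Pr_{x\sim D}\bigl[\exists\,x^a\in B(x,r):\hat h(x^a)\ne h^\star(x^a)\bigr]\le 2K\varepsilon_0,
\]
so setting $\varepsilon_0=\varepsilon/(2K)$ gives robust error $\varepsilon$ with sample complexity $O(K\cdot\vc(\cH)/\varepsilon)=O\bigl((1+1/\gamma)^{O(d)}\vc(\cH)/\varepsilon\bigr)$, as claimed. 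The hardest part will be the measure-construction step: building $\mu_x,\nu_x$ that are algorithmically samplable \emph{and} that satisfy the uniform Radon--Nikodym bound across every admissible pair $(x,x^a)$. In a general doubling metric space there is no canonical ``volume'' to hand, so one has to construct measures from scratch via nets or similar combinatorial objects and verify the bound by a careful covering argument. Everything downstream -- realizable VC generalization plus the Markov-plus-smoothing translation -- should go through routinely once the measures are in place.
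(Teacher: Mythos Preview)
Your proposal is correct and follows essentially the same route as the paper: perturb each training point into the $(1+\gamma)r$-ball, invoke a realizable PAC learner, smooth over the $\gamma r$-ball, and use Markov's inequality to turn the $\varepsilon_0$ pointwise error into an $O(K\varepsilon_0)$ robust error with $K=(1+1/\gamma)^{O(d)}$.

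The one place you make your life harder than necessary is the measure construction, which you flag as the hardest step. The paper sidesteps the net-based construction entirely by invoking the classical fact that every complete doubling metric space carries a global doubling \emph{measure} $\mu$ (Luukkainen--Saksman); one then simply takes $\mu_x$ and $\nu_{x^a}$ to be the normalized restrictions of this single $\mu$ to $B(x,(1+\gamma)r)$ and $B(x^a,\gamma r)$ respectively, and the Radon--Nikodym bound $d\nu_{x^a}/d\mu_x \le \mu(B(x,(1+\gamma)r))/\mu(B(x^a,\gamma r)) \le (1+1/\gamma)^{\zeta d}$ drops out directly from the doubling property (this is the content of the paper's Lemma~\ref{lem:majorities}). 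Your net-based construction would also work, but the off-the-shelf doubling measure is cleaner and avoids the ``careful covering argument'' you anticipate.
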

An interesting feature of the above result is the linear dependence of the sample complexity with respect to $\vc(\cH)$. This is in contrast to the best known upper bound for non-tolerant adversarial setting ~\citep{MontasserHS19} which depends on the \emph{dual VC-dimension} of the hypothesis class and in general is exponential in $\vc(\cH)$. 
 Moreover, this is the first PAC-type guarantee for the general perturb-and-smooth paradigm, indicating that the tolerant adversarial learning is the ``right'' learning model for studying these approaches.
While the above method enjoys simplicity and can be computationally efficient, one downside is that its sample complexity grows exponentially with the doubling dimension. For instance, such algorithm cannot be used on high-dimensional data in the Euclidean space. Another limitation is that the guarantee holds only in the (robustly) realizable setting.

In the second main part of our submission (Section \ref{sec:agn}) we show that, surprisingly, these limitations can be overcome by incorporating ideas from the tolerant framework and perturb-and-smooth algorithms into a novel compression scheme for robust learning. The resulting algorithm improves the dependence on the doubling dimension, and works in the general agnostic setting.
\begin{theorem}[Informal version of Corollary~\ref{Thm:compress_metric}]
Let $(X, \d)$ be a metric space with doubling dimension $d$ and $\cH$ a hypothesis class. 
Then $\cH$ can be learned tolerantly in the adversarially robust setting using $\widetilde{O}\left(\frac{O(d)\vc(\cH)\log(1+1/\gamma)}{\varepsilon^2}\right)$ samples, where $\widetilde{O}$ hides logarithmic factors, $\gamma$ encodes the amount of allowed tolerance, and $\varepsilon$ is the desired accuracy.
\end{theorem}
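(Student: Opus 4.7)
The plan is to construct a sample compression scheme whose size depends only linearly on both $v = \vc(\cH)$ and on the doubling dimension $d$, and then to invoke the standard sample-compression generalization bound to extract the sample complexity claim. The tolerance parameter $\gamma$ is used to replace the continuous perturbation balls by finite discretizations without incurring the exponential-in-$d$ loss suffered by the perturb-and-smooth algorithm of Theorem~\ref{thm:tpas_guarantee}.

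First, I would set up the discretization. By the doubling dimension assumption, every ball $B(x, r(1+\gamma))$ admits a $\gamma r/2$-net of size $(1+1/\gamma)^{O(d)}$. Because the smoothing step aggregates a hypothesis over nearby points, two elements of $B(x, r(1+\gamma))$ within $\gamma r/2$ of each other are interchangeable from the perspective of the tolerant loss, so it suffices for the learner to be robust against adversarial choices drawn from such a finite net. Crucially, an element of the net carries only $O(d \log(1+1/\gamma))$ bits of information, which is where the target dependence in the compression size comes from.

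Next, I would build the compression scheme itself in the spirit of Montasser, Hanneke, and Srebro, but with the discretized perturbation structure replacing the continuous one. Given an agnostic sample of size $m$, the compressor would (i) augment each training point by the $\gamma r/2$-net of its $r(1+\gamma)$-ball; (ii) run a boosting-style procedure in which the weak learner is a standard non-robust PAC-learner for $\cH$ applied to appropriately re-weighted sub-samples, with correctness measured against the discretized adversary; (iii) output the list of sample indices selected by the weak learners together with short side information identifying which net elements were used; (iv) at reconstruction time, rebuild the boosted classifier and smooth its output by a majority over a radius-$r$ ball, so that it can be compared against any comparator achieving low robust loss at radius $r(1+\gamma)$. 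Each round contributes a subset of size $O(v)$, and $\widetilde{O}(d \log(1+1/\gamma))$ rounds suffice because this is (up to log factors) the logarithm of the total number of adversarial options per point. The total compression size is therefore $k = \widetilde{O}(v \cdot d \log(1+1/\gamma))$.

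Finally, the standard agnostic sample-compression generalization bound gives that the population tolerant-robust loss of the reconstructed hypothesis exceeds its empirical tolerant-robust loss by at most $\widetilde{O}(\sqrt{k/m})$, so taking $m = \widetilde{O}(k/\varepsilon^2)$ yields the claimed rate. The main obstacle is step (ii): designing a boosting procedure that (a) respects the discretized perturbation structure, (b) terminates in $\widetilde{O}(d \log(1+1/\gamma))$ rounds rather than the $O(\vc^*(\cH))$ rounds required in the non-tolerant case, and (c) handles the agnostic rather than merely the realizable setting. The tolerance parameter is what unlocks all three: it converts a problem whose complexity is governed by the dual VC-dimension into one whose complexity is governed by the far smaller log-covering number of a perturbation ball in a doubling metric space.
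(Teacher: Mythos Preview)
Your high-level architecture---compression via boosting, with the tolerance parameter controlling the number of rounds---matches the paper. But the mechanism you propose for exploiting $\gamma$ has a genuine gap, and it is not the mechanism the paper uses. The gap is your discretization step. You assert that ``two elements of $B(x, r(1+\gamma))$ within $\gamma r/2$ of each other are interchangeable from the perspective of the tolerant loss,'' and hence that it suffices to be robust on a finite $\gamma r/2$-net. This is false for an arbitrary boosted classifier $g$: even if $g$ is correct at every net point in $B(x, r(1+\gamma))$, it can be wrong on an arbitrarily large \emph{measure} of points between net points, and then no smoothing (over radius $r$, $\gamma r$, or anything else) yields a classifier that is pointwise correct on all of $B(x, r)$, which is what the $\U$-robust loss demands. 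So step (iv) does not follow from (i)--(iii). The side information identifying net elements cannot rescue this either, since at decompression time you only have the compressed sub-sample, not the full sample, and so cannot rebuild the per-sample nets to define a net-based output classifier on all of $X$.

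The paper never discretizes. It works with the continuous doubling measure on $S_\V = \bigcup_i \V(x_i)$ and drives the $0/1$-error below $\beta/m$ with respect to this measure, where $\beta = \tfrac{1}{3}(1+1/\gamma)^{-\zeta d}$. Lemma~\ref{lem:majorities} (equivalently Property~\ref{assmt:UVWsmoothing}) then guarantees that the $\gamma r$-smoothed classifier is pointwise correct on every $\U(x_i) = B(x_i, r)$. The linear-in-$d$ dependence arises because the number of boosting rounds needed to reach error $\beta/m$ is $T = O(\ln(m/\beta)) = O(\ln m + d\ln(1+1/\gamma))$; the factor $d$ enters through $\ln(1/\beta)$, not through the log-cardinality of a net. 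Each weak learner is encoded by $O(\vc(\H))$ \emph{original} sample points (decoded via $\V$-robust ERM), with no side information about perturbations. So while you arrive at the same target compression size, the correct route is ``small measure-error on the big ball $\Rightarrow$ smoothed classifier pointwise correct on the small ball,'' not ``finite net $\Rightarrow$ few adversarial options.''
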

This algorithm exploits the connection between sample compression and adversarially robust learning~\cite{MontasserHS19}. However, unlike~\cite{MontasserHS19}, our new compression scheme sidesteps the dependence on the dual VC-dimension (refer to the discussion at the end of Section~\ref{sec:agn} for more details). As a result, we get an exponential improvement over the best known (nontolerant) sample complexity in terms of dependence on VC-dimension.

\section{Related work}
PAC-learning for adversarially robust classification has been studied extensively in recent years~\citep{cullina2018pac, awasthi2019robustness, MontasserHS19, feige2015learning, attias2018improved, montasser2020efficiently, ashtiani2020black}. These works provide learning algorithms that guarantee low generalization error in the presence of adversarial perturbations in various settings.
 The most general result is due to~\cite{MontasserHS19}, and is proved for general hypothesis classes and perturbation sets. All of the above results assume that the learner knows the kinds of perturbations allowed for the adversary. Some more recent papers have considered scenarios where the learner does not even need to know that.~\cite{goldwasser2020beyond} allow the adversary to perturb test data in unrestricted ways and are still able to provide learning guarantees. The catch is that it only works in the transductive setting and only if the learner is allowed to abstain from making a prediction on some test points.~\cite{montasser2021adversarially} consider the case where the learner needs to infer the set of allowed perturbations by observing the actions of the adversary. 

Tolerance was introduced by~\cite{ashtiani2020black} in the context of certification. They provide examples where certification is not possible unless we allow some tolerance. \cite{montasser2021transductive} study transductive adversarial learning and provide a ``tolerant'' guarantee. Note that unlike our work, the main focus of that paper is on the transductive setting. Moreover, they do not specifically study tolerance with respect to metric perturbation sets. Without a metric, it is not meaningful to expand perturbation sets by a factor $(1+\gamma)$ (as we do in the our definition of tolerance). Instead, they expand their perturbation sets by applying two perturbations in succession, which is akin to setting $\gamma = 1$. In contrast, our results hold in the more common inductive setting, and capture a more realistic setting where $\gamma$ can be any (small) real number larger than zero.

\modify{Subsequent to our work,~\cite{bhattacharjee2022robust} study adversarially robust learning with tolerance for ``regular'' VC-classes and show that a simple modification of robust ERM achieves a sample complexity polynomial in both VC-dimension and doubling dimension. In a similar vein,~\cite{raman2022probabilistically} identify a more general property of hypothesis classes for which robust ERM is sufficient for adversarially robust learning with tolerance.}

Like many recent adversarially robust learning algorithms~\citep{feige2015learning, attias2018improved}, our first algorithm relies on calls to a non-robust PAC-learner.~\cite{montasser2020reducing} formalize the question of reducing adversarially robust learning to non-robust learning and study finite perturbation sets of size $k$. They show a reduction that makes $O(\log^2{k})$ calls to the non-robust learner and also prove a lower bound of $\Omega(\log{k})$. It will be interesting to see if our algorithms can be used to obtain better bounds for the tolerant setting. Our first algorithm
makes one call to the non-robust PAC-learner at training time, but needs to perform potentially expensive smoothing for making actual predictions (see Theorem~\ref{thm:tpas_guarantee}).

\modify{A related line of work studies smallest achievable robust loss for various distributions and hypothesis classes. For example, ~\cite{bubeck2021universal} show that hypothesis classes with low robust loss must be overparametrized.~\cite{yang2020closer} explore real-world datasets and provide evidence that they are separable and therefore there must exist locally Lipschitz hypotheses with low robust loss. Note that the existence of such hypotheses does not immediately imply that PAC-learning is possible.}

The techniques of randomly perturbing the training data and smoothing the output classifier has been extensively used in practice and has shown good empirical success. Augmenting the training data with some randomly perturbed samples was used for handwriting recognition as early as by ~\cite{yaeger1996effective}. More recently, ``stability training'' was introduced by~\cite{zheng2016improving} for state of the art image classifiers where training data is augmented with Gaussian perturbations. Empirical evidence was provided that the technique improved the accuracy against naturally occurring perturbations. Augmentations with non-Gaussian perturbations of a large variety were considered by~\cite{hendrycks2019augmix}.

Smoothing the output classifier using random samples around the test point is a popular technique for producing \emph{certifiably} robust classifiers. A certification, in this context, is a guarantee that given a test point $x$, all points within a certain radius of $x$ receive the same label as $x$. Several papers have provided theoretical analyses to show that smoothing produces certifiably robust classifiers~\citep{cao2017mitigating, CohenRK19, lecuyer2019certified, li2019certified, liu2018towards, SalmanLRZZBY19, levine2020robustness},~\modify{whereas others have identified cases where smoothing does not work~\cite{yang2020randomized, blum2020random}}.

However, to the best of our knowledge, a PAC-type guarantee has not been shown for any algorithm that employs training data perturbations or output classifier smoothing, and our paper provides the first such analysis. 
\section{Notations and setup}
\label{s:notations}

We denote by $X$ the input domain and by $Y=\{0,1\}$ the binary label space. We assume that $X$ is equipped with a metric $\d$. A hypothesis $h:X\to Y$ is a function that assigns a label to each point in the domain.
A hypothesis class $\H$ is a set of such hypotheses. For a sample $S = ((x_1, y_1), \ldots, (x_n, y_n))\in (X\times Y)^n$, we use the notation $S^X = \{x_1, x_2, \ldots, x_n \}$ to denote the collection of domain points $x_i$ occurring in $S$. The binary (also called 0-1) loss of $h$ on data point $(x,y)\in X\times Y$ is defined by
\[
\blo(h, x, y) = \indct{h(x) \neq y},
\]
where $\indct{.}$ is the indicator function. Let $P$ by a probability distribution over $X\times Y$. Then the \emph{expected binary loss} of $h$ with respect to $P$ is defined by
\[
\bLo{P} (h) = \Ex_{(x,y)\sim P} [\blo(h , x, y)]
\]
Similarly, the \emph{empirical binary loss} of $h$ on sample $S = ((x_1, y_1), \ldots, (x_n, y_n))\in (X\times Y)^n$ is defined as $\bLo{S}(h) = \frac{1}{n}\sum_{i=1}^n \blo(h, x_i, y_i)$. We also define the \emph{approximation error} of $\cH$ with respect to $P$ as $\bLo{P} (\cH) = \inf_{h\in \cH}\bLo{P} (h)$.

A \emph{learner} $\A$ is a function that takes in a finite sequence of labeled instances $S = ((x_1, y_1), \ldots, (x_n, y_n))$ and outputs a hypothesis $h = \A(S)$. The following definition abstracts the notion of PAC-learning~\cite{vapnikcherv71, Valiant84}.

\begin{definition}[PAC-learner]\label{def:learn}
Let $\P$ be a set of distributions over $X\times Y$ and $\H$ a hypothesis class. We say $\A$ PAC-learns $(\H, \P)$ with $m_\A: (0,1)^2\to \mathbb{N}$ samples if the following holds:
for every distribution $P\in \P$ over $X\times Y$, and every $\varepsilon,\delta \in (0,1)$, if $S$ is an \iid sample of size at least $m_\A(\varepsilon, \delta)$ from $P$, then with probability at least $1-\delta$ (over the randomness of $S$) we have
\[
\Lo{P}(\A(S)) \leq \Lo{P}(\cH) + \varepsilon.
\]
$\A$ is called an \emph{agnostic learner} if $\P$ is the set of all distributions on $X\times Y$, and a \emph{realizable learner} if $\P=\{P:\Lo{P}(\cH) = 0\}$.
\end{definition}

The smallest function $m: (0,1)^2\to \mathbb{N}$ for which there exists a learner $\A$ that satisfies the above definition with $m_{\A} = m$ is referred to as the (realizable or agnostic) \emph{sample complexity} of learning $\cH$.

The existence of sample-efficient PAC-learners for VC classes is a standard result~\cite{vapnikcherv71}. We state the results formally in Appendix~\ref{app_pac_learning}.

\subsection{Tolerant adversarial PAC-learning}

Let $\U:X\to 2^X$ be a function that maps each point in the domain to the set of its ``admissible'' perturbations. We call this function the \emph{perturbation type}. The adversarial loss of $h$ with respect to $\U$ on $(x,y)\in X\times Y$ is defined by
\[
 \rlo{\U}(h, x, y) = \max_{z\in\U(x)} \{\blo(h, z, y)\}
\]
The \emph{expected adversarial loss} with respect to $P$ is defined by $\rLo{\U}{P}(h)=\Ex_{(x,y)\sim P}\rlo{\U}(h, x, y)$. The \emph{empirical adversarial loss} of $h$ on sample $S = ((x_1, y_1), \ldots, (x_n, y_n))\in (X\times Y)^n$ is defined by $\rLo{\U}{S}(h) = \frac{1}{n}\sum_{i=1}^n \rlo{\U}(h, x_i, y_i)$. Finally, the \emph{adversarial approximation error} of $\cH$ with respect to $\U$ and $P$ is defined by $\rLo{\U}{P} (\cH) = \inf_{h\in \cH}\rLo{\U}{P} (h)$.

The following definition generalizes the setting of PAC adversarial learning to what we call the \emph{tolerant} setting, where we consider two perturbation types $\U$ and $\V$. We say $\U$ is \emph{contained in} $\V$ and and write it as $\U \prec \V$ if $\U(x)\subset\V(x)$ for all $x\in X$.

\begin{definition}[Tolerant Adversarial PAC-learner]\label{def:adv_learn_tol}
Let $\P$ be a set of distributions over $X\times Y$, $\cH$ a hypothesis class, and $\U \prec \V$ two perturbation types.
We say $\A$ \emph{tolerantly} PAC-learns $(\cH, \P, \U, \V)$ with $m_\A: (0,1)^2\to \mathbb{N}$ samples if the following holds:
for every distribution $P\in\P$ and every $\varepsilon,\delta \in (0,1)$, if $S$ is an \iid sample of size at least $m_\A(\varepsilon, \delta)$ from $P$, then with probability at least $1-\delta$ (over the randomness of $S$) we have
\[
\rLo{\U}{P}(\A(S)) \leq \rLo{\V}{P}(\cH) + \varepsilon.
\]
We say $\A$ is a tolerant PAC-learner in the \emph{agnostic setting} if $\P$ is the set of all distributions over $X\times Y$, and in the \emph{tolerantly realizable setting} if $\P=\{P:\rLo{\V}{P}(\cH) = 0\}$.
\end{definition}

In the above context, we refer to $\U$ as the \emph{actual perturbation type} and to $\V$ as the \emph{reference perturbation type}. The case where $\U(x)=\V(x)$ for all $x\in X$ corresponds to the usual adversarial learning scenario (with no tolerance). 

\subsection{Tolerant adversarial PAC-learning in metric spaces}

If $X$ is equipped with a metric $\d(.,.)$, then $\U(x)$ can be naturally defined by a ball of radius $r$ around $x$, i.e.,  $\U(x)=\B_r(x) = \{z\in X ~\mid~ \d(x,z) \leq r\}$. To simplify the notation, we sometimes use $\rlo{r}(h, x, y)$ instead of $\rlo{\B_r}(h, x, y)$ to denote the adversarial loss with respect to $\B_r$.

In the tolerant setting, we consider the perturbation sets $\U(x) = \B_r(x)$ and $\V(x) = \B_{(1+\gamma)r}(x)$, where $\gamma>0$ is called the \emph{tolerance parameter}. Note that $\U \prec \V$. We now define PAC-learning with respect to the metric space.
 
\begin{definition}[Tolerant Adversarial Learning in metric spaces]\label{def:adv_learn_metric}
Let $(X, \d)$ be a metric space, $\cH$ a hypothesis class, and $\P$ a set of distributions of $X\times Y$. We say $(\cH, \P, \d)$ is tolerantly PAC-learnable with $m:(0,1)^3\to \mathbb{N}$ samples when for every $r, \gamma>0$ there exist a PAC-learner $\A_{r, \gamma}$ for $(\cH,\P, B_r, B_{r(1+\gamma)})$ that
uses $m(\varepsilon, \delta, \gamma)$ samples.
\end{definition}

\begin{remark}
In this definition the learner receives $\gamma$ and $r$ as input but its sample complexity does not depend on $r$ (but can depend on $\gamma$). Also, as in Definition~\ref{def:adv_learn_tol}, the tolerantly realizable setting corresponds to $\P=\{P:\rLo{r(1+\gamma)}{P}(\H) = 0\}$ while in the agnostic setting $\P$ is the set of all distributions over $X\times Y$. 
\end{remark}

The doubling dimension and the doubling measure of the metric space will play important roles in our analysis. We refer the reader to Appendix~\ref{app_sec_metric} for their definitions.

We will use the following lemma in our analysis, whose proof can be found in Appendix~\ref{app_sec_metric}:
\begin{lemma}\label{lemma:simple_doubling}
For any family $\mathcal{M}$ of complete, doubling metric spaces, there exist constants $c_1, c_2 > 0$ such that for any metric space $(X, \d)\in\Mcal$ with doubling dimension $d$, there exists a measure $\mu$ such that if a ball $\B_r$ of radius $r>0$ is completely contained inside a ball $\B_{\alpha r}$ of radius $\alpha r$ (with potentially a different center) for any $\alpha > 1$, then $0<\mu(\B_{\alpha r})\leq (c_1\alpha)^{c_2 d}\mu(\B_r)$. Furthermore, if we have a constant $\alpha_0 > 1$ such that we know that $\alpha \geq\alpha_0$ then the bound can be simplified to $0 < \mu(\B_{\alpha r})\leq \alpha^{\zeta d}\mu(\B_r)$, where $\zeta$ depends on $\Mcal$ and $\alpha_0$.
\end{lemma}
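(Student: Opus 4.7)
The plan is to invoke the classical theorem of Luukkainen and Saksman that every complete doubling metric space admits a doubling Borel measure, and then obtain the claimed inequalities through iterated doubling combined with a triangle-inequality argument. Concretely, for any $(X,\d)\in\Mcal$ with doubling dimension $d$, Luukkainen--Saksman produces a Borel measure $\mu$ that is positive and finite on every ball of positive radius and satisfies a doubling inequality $\mu(\B_{2r}(x))\le C_\mu\,\mu(\B_r(x))$ for all $x\in X$ and $r>0$, with $C_\mu$ controlled purely in terms of the doubling constant of $(X,\d)$. Since the doubling constant is at most $2^d$ for a space of doubling dimension $d$, one may take $\log_2 C_\mu \le c_2 d$ for an absolute constant $c_2$, uniformly over the family $\Mcal$.

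Given the containment $\B_r(x_1)\subseteq\B_{\alpha r}(x_2)$, the first step is to reduce the distinct-center case to concentric balls. Because $x_1\in\B_{\alpha r}(x_2)$ we have $\d(x_1,x_2)\le\alpha r$, hence by the triangle inequality $\B_{\alpha r}(x_2)\subseteq\B_{2\alpha r}(x_1)$, so $\mu(\B_{\alpha r}(x_2))\le\mu(\B_{2\alpha r}(x_1))$. Next, choose $k=\lceil\log_2(2\alpha)\rceil$, so that $2\alpha r\le 2^k r$ and $k\le\log_2(4\alpha)$; iterating the doubling inequality $k$ times yields
\[
\mu(\B_{\alpha r}(x_2))\;\le\;\mu(\B_{2\alpha r}(x_1))\;\le\;C_\mu^k\,\mu(\B_r(x_1))\;\le\;(4\alpha)^{\log_2 C_\mu}\mu(\B_r(x_1))\;\le\;(c_1\alpha)^{c_2 d}\mu(\B_r(x_1)),
\]
with $c_1=4$. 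Positivity of $\mu(\B_r(x_1))$, and therefore of $\mu(\B_{\alpha r}(x_2))$, is guaranteed directly by the Luukkainen--Saksman construction.

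For the refined bound when $\alpha\ge\alpha_0>1$, I would simply absorb the leading constant into the exponent: since $c_1^{c_2 d}=\alpha_0^{c_2 d\log_{\alpha_0} c_1}\le\alpha^{c_2 d\log_{\alpha_0} c_1}$, the previous estimate becomes $\mu(\B_{\alpha r})\le\alpha^{\zeta d}\mu(\B_r)$ with $\zeta = c_2\bigl(1+\log_{\alpha_0} c_1\bigr)$, a quantity depending only on $\Mcal$ (through $c_1,c_2$) and on $\alpha_0$. The main technical ingredient—and the only nontrivial obstacle—is the existence of a doubling measure whose doubling constant is controlled uniformly by the doubling dimension; once that is in hand, every remaining step is routine triangle-inequality and logarithmic bookkeeping.
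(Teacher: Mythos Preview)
Your proposal is correct and follows essentially the same route as the paper: invoke Luukkainen--Saksman to obtain a doubling measure with $\log_2 C_\mu = O(d)$, use the triangle inequality to replace the outer ball by a concentric ball of radius $2\alpha r$, iterate the doubling inequality $\lceil\log_2(2\alpha)\rceil$ times, and then absorb the multiplicative constant into the exponent via $\zeta = c_2(1+\log_{\alpha_0} c_1)$. The only cosmetic difference is that you obtain $c_1=4$ whereas the paper records $c_1=2$ (with a slightly different exponent bookkeeping), which is immaterial.
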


Later, we will set $\alpha = 1+1/\gamma$ where $\gamma$ is the tolerance parameter. Since we are mostly interested in small values of $\gamma$, suppose we decide on some loose upper bound $\Gamma\gg\gamma$. This corresponds to saying that there exists some $\alpha_0 > 1$ such that $\alpha \geq \alpha_0$. 

It is worth noting that in the special case of Euclidean metric spaces, we can set both $c_1$ and $c_2$ to be 1. In the rest of the paper, we will assume we have a loose upper bound $\Gamma\gg\gamma$ and use the simpler bound from  Lemma~\ref{lemma:simple_doubling} extensively.

Given a metric space $(X, d)$ and a measure $\mu$ defined over it, for any subset $Z\subseteq X$ for which $\mu(Z)$ is non-zero and finite, $\mu$ induces a \emph{probability} measure $P_Z^\mu$ over $Z$ as follows. For any set $Z'\subseteq Z$ in the $\sigma$-algebra over $Z$, we define $P_Z^\mu(Z') = \mu(Z')/\mu(Z)$. With a slight abuse of notation, we write $z\sim Z$ to mean $z\sim P_Z^\mu$ whenever we know $\mu$ from the context. 

Our learners rely on being able to sample from $P_Z^\mu$. Thus we define the following oracle, which can be implemented efficiently for $\ell_p$ spaces.
\begin{definition}[Sampling Oracle]
Given a metric space $(X, \d)$ equipped with a doubling measure $\mu$, a \emph{sampling oracle} is an algorithm that when queried with a $Z\subseteq X$ such that $\mu(Z)$ is finite, returns a sample drawn from $P_Z^\mu$. We will use the notation $z\sim Z$ for queries to this oracle.
\end{definition}

\section{The perturb-and-smooth approach for tolerant adversarial learning}
\label{sec:erm}

In this section we focus on tolerant adversarial PAC-learning in metric spaces (Definition~\ref{def:adv_learn_metric}), and show that VC classes are tolerantly PAC-learnable in the tolerantly realizable setting. Interestingly, we prove this result using an approach that resembles the ``perturb-and-smooth'' paradigm which is used in practice (for example by~\cite{CohenRK19}). The overall idea is to ``perturb'' each training point $x$, train a classifier on the ``perturbed'' points, and ``smooth out'' the final hypothesis using a certain majority rule. 

We employ three perturbation types: $\U$ and $\V$ play the role of the \emph{actual} and the \emph{reference} perturbation type respectively.
Additionally, we consider a perturbation type $\W:X\to 2^X$, which is used for smoothing.
We assume $\U \prec \V$ and $\W \prec \V$.  
For this section, we will use metric balls for the three types. Specifically, if $\U$ consists of balls of radius $r$ for some $r>0$, then $\W$ will consists of balls of radius $\gamma r$ and $\V$ will consist of balls of radius $(1+\gamma)r$.

\begin{definition}[Smoothed classifier]\label{def:smooth}
For a hypothesis $h:X\to \{0,1\}$, and perturbation type $\W: X \to 2^X$, we let $\bar{h}_{\W}$ denote the classifier resulting from replacing the label $h(x)$ with the average label over $\W(x)$, that is
\[
\bar{h}_{\W}(x) = \indct{\Ex_{x'\sim \W(x)}h(x')\geq 1/2}
\]
For metric perturbation types, where $\W$ is a ball of some radius $r$, we also use the notation $\bar{h}_{r}$
and when the type $\W$ is clear from context, we may omit the subscript altogether and simply write $\bar{h}$ for the smoothed classifier.
\end{definition}

\paragraph{The tolerant perturb-and-smooth algorithm} 
We propose the following learning algorithm, \tpas, for tolerant learning in metric spaces. 
Let the perturbation radius be $r>0$ for the actual type $\U = \B_r$, and let $S = ((x_1, y_1), \ldots, (x_m, y_m))$ be the training sample. For each $x_i\in S^X$, the learner samples a point $x'_i\sim\B_{r\cdot(1+\gamma)}(x_i)$ (using the sampling oracle) from the expanded reference perturbation set $\V(x_i) = \B_{(1+\gamma)r}(x_i)$. 
Let $S' = ((x'_1, y_1), \ldots, (x'_m, y_m))$. \tpas then invokes a (standard, non-robust) PAC-learner $\A_\H$ for the hypothesis class $\Hcal$ on the perturbed data $S'$. 
We let $\hat{h} = \A_{\H}(S')$ denote the output of this PAC-learner.
Finally, \tpas outputs the $\W$-smoothed version of $\bar{h}_{\gamma r}$ for $\W= \B_{\gamma r}$.
That is, $\bar{h}_{\gamma r}(x)$ is simply the majority label in a ball of radius $\gamma r$ around $x$ with respect to the distribution defined by $\mu$, see also Definition \ref{def:smooth}. 
We will prove below that this $\bar{h}_{\gamma r}$ has a small $\U$-adversarial loss. 
Algorithm \ref{alg:tpas} below summarizes our learning procedure.

\begin{algorithm}
\caption{Tolerant Perturb and Smooth (\tpas)}\label{alg:tpas}
\begin{algorithmic}
\STATE {\bf Input:} Radius $r$, tolerance parameter $\gamma$, 
data $S =  ((x_1, y_1), \ldots, (x_m, y_m))$, accesss to sampling oracle $\Ocal$ for $\mu$ and PAC-learner $\A_{\H}$.
\STATE Initialize $S' = \emptyset$
\FOR{$i = 1$ to $m$}
\STATE Sample $x'_i \sim \B_{(1+\gamma) r}(x_i)$
\STATE Add $(x'_i, y_i)$ to $S'$
\ENDFOR
\STATE Set $\hat{h} = \A_{\H}(S')$
\STATE {\bf Output:} $\bar{h}_{\gamma r}$ defined by 
\STATE \qquad\qquad $\bar{h}_{\gamma r}(x) = \indct{\Ex_{x'\sim\B_{\gamma r}(x)} \hat{h}(x') \geq 1/2}$
\end{algorithmic}
\end{algorithm}

The following is the main result of this section.

\begin{theorem}\label{thm:tpas_guarantee}
Let $(X,\d)$ be an any metric space with doubling dimension $d$ and doubling measure $\mu$. Let $\cO$ be a sampling oracle for $\mu$. Let $\cH$ be a hypothesis class and $\P$ a set of distributions over $X\times Y$. Assume $\A_\cH$ PAC-learns $\cH$ with $m_\cH(\varepsilon, \delta)$ samples in the realizable setting. Then there exists a learner $\A$, namely \tpas, that 
\begin{itemize}
    \item Tolerantly PAC-learns $(\cH, \P, \d)$ in the tolerantly realizable setting with sample complexity bounded by
    $m(\varepsilon, \delta, \gamma)=O\left(
    m_\cH(\varepsilon, \delta)\cdot (1+1/\gamma)^{\zeta d}\right) = O\left(\frac{\vc(\Hcal) + \log{1/\delta}}{\varepsilon}\cdot (1+1/\gamma)^{\zeta d}\right)$, where $\gamma$ is the tolerance parameter and $d$ is the doubling dimension.
    \item Makes only one query to $\A_\cH$
    \item Makes $m(\varepsilon, \delta, \gamma)$ queries to sampling oracle $\cO$
\end{itemize}
\end{theorem}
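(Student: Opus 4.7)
The plan is to reduce tolerant adversarial PAC-learning to ordinary \emph{realizable} PAC-learning on a carefully chosen auxiliary distribution $P'$ over $X \times Y$, defined by drawing $(x, y)\sim P$ and then redrawing $x' \sim \B_{(1+\gamma)r}(x)$ under $\mu$. By construction, the perturbed training set $S'$ produced inside \tpas is \iid from $P'$, so we can feed it into the standard PAC-learner $\A_\cH$ as a black box. The analysis then rests on two ingredients: (i) in the tolerantly realizable case $P'$ is realizable by $\cH$, so $\A_\cH$ returns some $\hat h$ with small average $P'$-loss; and (ii) the smoothing step converts this average-case bound on $\hat h$ into an \emph{adversarial} bound on $\bar h_{\gamma r}$ against $P$, at the price of a doubling-dimension factor.

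\textbf{Realizability of $P'$.} I would first pick any $h^\ast \in \cH$ with $\rLo{(1+\gamma)r}{P}(h^\ast) = 0$, which exists by assumption. Since $h^\ast$ labels \emph{every} point of $\B_{(1+\gamma)r}(x)$ correctly for $P$-almost every $(x, y)$, we immediately get $\bLo{P'}(h^\ast) = 0$, so $P'$ is realizable by $\cH$. A standard realizable VC bound then gives $\bLo{P'}(\hat h) \le \varepsilon'$ with probability at least $1-\delta$ whenever the sample size is at least $m_\cH(\varepsilon', \delta) = O((\vc(\cH) + \log(1/\delta))/\varepsilon')$.

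\textbf{Smoothing plus doubling (the main step).} The heart of the argument is turning $\bLo{P'}(\hat h) \le \varepsilon'$ into a bound on $\rLo{r}{P}(\bar h_{\gamma r})$. Fix any $(x, y)$ with $\rlo{r}(\bar h_{\gamma r}, x, y) = 1$; then there is some $z \in \B_r(x)$ with $\bar h_{\gamma r}(z) \neq y$, which by definition of the smoothed classifier means $\hat h$ disagrees with $y$ on at least half of $\B_{\gamma r}(z)$ under $\mu$. The triangle inequality gives $\B_{\gamma r}(z) \subseteq \B_{(1+\gamma)r}(x)$, so the conditional $P'$-loss at $(x, y)$ satisfies
\[
\bE_{x' \sim \B_{(1+\gamma)r}(x)}\bigl[\indct{\hat h(x') \neq y}\bigr]
\;\ge\; \tfrac{1}{2}\cdot\frac{\mu(\B_{\gamma r}(z))}{\mu(\B_{(1+\gamma)r}(x))}
\;\ge\; \tfrac{1}{2}(1+1/\gamma)^{-\zeta d},
\]
where the last bound applies Lemma~\ref{lemma:simple_doubling} with $\alpha = (1+\gamma)/\gamma = 1+1/\gamma$. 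Taking expectation over $(x, y) \sim P$ yields $\rLo{r}{P}(\bar h_{\gamma r}) \le 2(1+1/\gamma)^{\zeta d}\,\bLo{P'}(\hat h)$.

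\textbf{Putting it together, and the main obstacle.} Plugging $\varepsilon' = \varepsilon / (2(1+1/\gamma)^{\zeta d})$ into the realizable VC sample bound gives exactly the claimed sample complexity $O\bigl((\vc(\cH) + \log(1/\delta))/\varepsilon \cdot (1+1/\gamma)^{\zeta d}\bigr)$; the remaining two bullets (a single call to $\A_\cH$, and one oracle query per training point) are immediate from inspecting Algorithm~\ref{alg:tpas}. The delicate part is the smoothing/doubling argument: I have to make sure the three radii $r$, $\gamma r$, and $(1+\gamma)r$ nest so that every adversarial witness ball $\B_{\gamma r}(z)$ sits inside the reference ball $\B_{(1+\gamma)r}(x)$, and so that the radius ratio is exactly $1+1/\gamma$, which is what makes Lemma~\ref{lemma:simple_doubling} produce the clean linear-in-$d$ exponent. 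If the smoothing radius were chosen differently (say $\gamma' r$ with $\gamma' \neq \gamma$), the tolerant slack and the doubling blow-up would no longer cancel and either the containment or the exponent would degrade.
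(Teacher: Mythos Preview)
Your proposal is correct and follows essentially the same approach as the paper: reduce to realizable PAC-learning on the perturbed distribution $P'$, then use the doubling-measure comparison between $\B_{\gamma r}(z)$ and $\B_{(1+\gamma)r}(x)$ to transfer the average-case bound on $\hat h$ to an adversarial bound on $\bar h_{\gamma r}$. The only cosmetic difference is that the paper packages the doubling step as a separate Lemma~\ref{lem:majorities} and then applies Markov's inequality (with threshold $1/3$ rather than $1/2$), whereas you argue the contrapositive directly---both routes yield the same $(1+1/\gamma)^{\zeta d}$ blow-up and the same final sample bound up to constants.
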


The proof of this theorem uses the following key technical lemma (its proof can be found in  Appendix~\ref{app_lemma}):
\begin{lemma}\label{lem:majorities}
Let $r>0$ be a perturbation radius, $\gamma>0$ a tolerance parameter, and $g:X\to Y$ a classifier. 
For $x\in X$ and $y\in Y = \{0,1\}$, we define 
$$\Sigma_{g, y}(x) = \Ex_{z\sim\B_{r(1+\gamma)}(x)}\indct{g(z)\neq y}  \quad\text{and}\quad 
\sigma_{g,y}(x) = \Ex_{z\sim\B_{r\gamma}(x)}\indct{g(z)\neq y}.$$  
Then $\Sigma_{g,y}(x)\leq\frac{1}{3}\cdot\left(\frac{1+\gamma}{\gamma}\right)^{-\zeta d}$ implies that $\sigma_{g,y}(z)\leq 1/3$ for all $z\in\B_r(x)$.
\end{lemma}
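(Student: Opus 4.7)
The plan is to reduce the lemma to two elementary ingredients: the triangle inequality, which gives a containment of balls, and the doubling bound of Lemma~\ref{lemma:simple_doubling}, which compares the measures of nested balls. The quantities $\sigma_{g,y}(z)$ and $\Sigma_{g,y}(x)$ are both ratios of the ``bad'' measure (points in the ball on which $g$ disagrees with $y$) to the total measure of the ball, so once I have a containment and a volume comparison, the bound falls out.

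First I fix an arbitrary $z\in\B_r(x)$ and observe that $\B_{r\gamma}(z)\subseteq\B_{r(1+\gamma)}(x)$: for any $w\in\B_{r\gamma}(z)$, the triangle inequality gives $\d(w,x)\leq\d(w,z)+\d(z,x)\leq r\gamma+r=r(1+\gamma)$. Unfolding the definitions,
\[
\sigma_{g,y}(z)\,\mu(\B_{r\gamma}(z)) \;=\; \mu\bigl(\{w\in\B_{r\gamma}(z): g(w)\neq y\}\bigr)
\;\leq\; \mu\bigl(\{w\in\B_{r(1+\gamma)}(x): g(w)\neq y\}\bigr)
\;=\; \Sigma_{g,y}(x)\,\mu(\B_{r(1+\gamma)}(x)),
\]
so $\sigma_{g,y}(z)\leq \Sigma_{g,y}(x)\cdot\mu(\B_{r(1+\gamma)}(x))/\mu(\B_{r\gamma}(z))$.

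Next I bound the ratio of measures using the doubling property. The small ball $\B_{r\gamma}(z)$ sits inside $\B_{r(1+\gamma)}(x)=\B_{\alpha\cdot r\gamma}(x)$ with inflation factor $\alpha=(1+\gamma)/\gamma=1+1/\gamma$. Since we are operating under the standing assumption of a fixed upper bound on $\gamma$ (equivalently, a lower bound $\alpha_0>1$ on $\alpha$), Lemma~\ref{lemma:simple_doubling} yields $\mu(\B_{r(1+\gamma)}(x))\leq \alpha^{\zeta d}\mu(\B_{r\gamma}(z)) = ((1+\gamma)/\gamma)^{\zeta d}\mu(\B_{r\gamma}(z))$. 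Plugging this in and using the hypothesis $\Sigma_{g,y}(x)\leq \tfrac13((1+\gamma)/\gamma)^{-\zeta d}$ gives
\[
\sigma_{g,y}(z)\;\leq\;\Sigma_{g,y}(x)\cdot\left(\frac{1+\gamma}{\gamma}\right)^{\zeta d}\;\leq\;\frac{1}{3},
\]
which holds for every $z\in\B_r(x)$, as desired.

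There is no genuine obstacle in this proof; the content is entirely in lining up the containment $\B_{r\gamma}(z)\subseteq\B_{r(1+\gamma)}(x)$ with the right volume comparison. The only thing to be mindful of is the direction of the doubling bound (one wants to upper bound the large-ball measure by the small-ball measure, not vice versa) and verifying that the applicable version of Lemma~\ref{lemma:simple_doubling} is the one with exponent $\zeta d$, which is exactly what the tolerance regime $\alpha\geq\alpha_0$ was set up to give.
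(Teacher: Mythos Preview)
Your proof is correct and follows essentially the same route as the paper's: both use the triangle-inequality containment $\B_{r\gamma}(z)\subseteq\B_{r(1+\gamma)}(x)$, bound the error-set measure in the small ball by that in the large ball, and then apply Lemma~\ref{lemma:simple_doubling} with $\alpha=(1+\gamma)/\gamma$ to control the ratio of ball measures. The only cosmetic difference is that the paper names the error set $X_{\text{err}}$ explicitly, whereas you work directly with the measure identities.
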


\begin{proof}[Proof of Theorem \ref{thm:tpas_guarantee}]
Consider some $\epsilon_0 >0$ and $0<\delta < 1$ to be given (we will pick a suitable value of $\epsilon_0$ later), and assume the PAC-learner $\A_\H$ was invoked on the perturbed sample $S'$ of size at least $m_A(\epsilon_0,\delta)$. According to definition \ref{def:learn}, this implies that
with probability $1-\delta$, the output $\hat{h} = \A_\H(S)$ has (binary) loss at most $\epsilon_0$ with respect to the data-generating distribution. Note that the relevant distribution here is the two-stage process of the original data generating distribution $P$ and the perturbation sampling according to $\V = \B_{(1+\gamma)r}$. Since $P$ is $\V$-robustly realizable, the two-stage process yields a realizable distribution with respect to the standard $0/1$-loss. Thus, we have
\[
\Ex_{(x, y)\sim P}\Ex_{z\sim\B_{r(1+\gamma)}(x)}\indct{\hat{h}(z)\neq y} \leq\epsilon_0.
\]
With Lemma \ref{lem:majorities}, this becomes
$\Ex_{(x, y)\sim P} \Sigma_{\hat{h}, y}(x) \leq\epsilon_0$.
For $\lambda >0$, Markov's inequality then yields :
\begin{align}
\Ex_{(x, y)\sim P}\indct{\Sigma_{\hat{h}, y}(x)\leq\lambda} >1-\epsilon_0/\lambda\label{eqn:markov}
\end{align}
Thus setting $\lambda = \frac{1}{3}\cdot\left(\frac{1+\gamma}{\gamma}\right)^{-\zeta d}$ and plugging in the result of the Lemma \ref{lem:majorities} to equation (\ref{eqn:markov}), we get 
$$\Ex_{(x,y)\sim P}\indct{\forall z \in\B_r(x), \sigma_{\hat{h}, y}(z)\leq 1/3}>1-\epsilon_0/\lambda.$$ 
Since $\sigma_{\hat{h}, y}(z)\leq 1/3$ implies that 
$\indct{\Ex_{z'\sim\B_{\gamma r(z)}}\hat{h}(z')\geq 1/2}=y,$
using the definition of the smoothed classifier $\bar{h}_{\gamma r}$ we get
\begin{align}
    & \Ex_{(x,y)\sim P}\indct{\exists z\in\B_r(x), \bar{h}_{\gamma r}(z)\neq y}\leq\epsilon_0/\lambda\nonumber,\\
\end{align}
which implies $\rLo{r}{P}(\bar{h}_{\gamma r}) \leq \epsilon_0/\lambda$.
Thus, for the robust learning problem, if we are given a desired accuracy $\varepsilon$ and we want $\rLo{r}{P}(\bar{h}_{\gamma r})\leq\varepsilon$,  we can pick $\epsilon_0 = \lambda\varepsilon$.
Putting it all together, we get sample complexity $m \leq O(\frac{\vc(\Hcal) + \log{1/\delta}}{\epsilon_0})$ where $\epsilon_0 = \lambda\varepsilon$, and $\lambda = \frac{1}{3}\cdot\left(\frac{1+\gamma}{\gamma}\right)^{-\zeta d}$. Therefore, $m\leq O\left(\frac{\vc(\Hcal) + \log{1/\delta}}{\varepsilon}\cdot (1+1/\gamma)^{\zeta d}\right)$.
\end{proof}

{\bf Computational complexity of the learner.}
Assuming we have access to $\cO$ and an efficient algorithm for non-robust PAC-learning in the realizable setting, 
 we can compute $\hat{h}$ efficiently. Therefore, the learning can be done efficiently in this case. However, at the prediction time, we need to compute $\bar{h}(x)$ on new test points which requires us to compute an expectation. We can instead \emph{estimate} the expectations using random samples from the sampling oracle. For a single test point $x$, if the number of samples we draw is $\Omega(\log{1/\delta})$ then with probability at least $1-\delta$ we get the same result as that of the optimal $\bar{h}(x)$. Using more samples we can boost this probability to guarantee a similar output to that of $\bar{h}$ on a larger set of test points.
 
 {\bf The traditional non-tolerant framework does not justify the use of perturb-and-smooth-type approaches.}
 The introduction of the tolerance in the adversarial learning framework is crucial for being able to prove guarantees for perturb-and-smooth-type algorithms. To see why, consider a simple case where the domain is the real line, the perturbation set is an open Euclidean ball of radius 1, and the hypothesis class is the set of all thresholds. Assume that the underlying distribution is supported only on two points: $\Pr(x=-1,y=1)=\Pr(x=1, y=0)=0.5$. This distribution is robustly realizable, but the threshold should be set exactly to $x=0$ to get a small error. However, the perturb-and-smooth method will fail because the only way the PAC-learner $\Acal_\Hcal$ sets the threshold to $x=0$ is if it receives a (perturbed) sample exactly at $x=0$, whose probability is 0. 
 
\section{Improved tolerant learning guarantees through sample compression}
\label{sec:agn}

The perturb-and-smooth approach discussed in the previous section offers a general method for tolerant robust learning. However, one shortcoming of this approach is the exponential dependence of its sample complexity with respect to the doubling dimension of the metric space. Furthermore, the tolerant robust guarantee relied on the data generating distribution being tolerantly realizable. 
In this section, we propose another approach that addresses both of these issues. The idea is to adopt the perturb-and-smooth approach within a sample compression argument.
We introduce the notion of a $(\U,\V)$-tolerant sample compression scheme and present a learning bound based on such a compression scheme, starting with the realizable case. We then show that this implies learnability in the agnostic case as well. Remarkably, this tolerant compression based analysis will yield bounds on the sample complexity that avoid the exponential dependence on the doubling dimension.

For a compact representation, we will use the general notation $\U, \V, $ and $\W$ for the three perturbation types (actual, reference and smoothing type) in this section and will assume that they satisfy the Property \ref{assmt:UVWsmoothing} below for some parameter $\beta >0$.
Lemma \ref{lem:majorities} implies that, in the metric setting, for any radius $r$ and tolerance parameter $\gamma$ the perturbation types $\U = \B_r$, $\V = \B_{(1+\gamma)r}$, and $\W = \B_{\gamma r}$ have this property for $\beta = \frac{1}{3}\left(\frac{1+\gamma}{\gamma} \right)^{-\zeta d}$.

\begin{property}\label{assmt:UVWsmoothing}
For a fixed $0 < \beta <1/2$, we assume that the perturbation types $\V, \U$ and $\W$ are so that for any classifier $h$ and any $x\in X$, any $y\in\{0,1\}$ if 
\[
\Ex_{z\sim \V(x)}[h(z) = y] \geq 1-\beta
\]
then $\W$-smoothed class classifier $\bar{h}_{\W}$ satisfies $\bar{h}_{\W}(z) = y$ for all $z\in\U(x)$.
\end{property}
A compression scheme of size $k$ is a pair of functions $(\kappa, \rho)$, where the compression function $\kappa: \bigcup_{i=1}^{\infty}(X\times Y)^i \to \bigcup_{i=1}^{k}(X\times Y)^i$ maps samples $S = ((x_1, y_1), (x_2, y_2), \ldots, (x_m, y_m))$ of arbitrary size to sub-samples of $S$ of size at most $k$, and $\rho:\bigcup_{i=1}^{k}(X\times Y)^i \to Y^X$ is a decompression function that maps samples to classifiers. The pair $(\kappa, \rho)$ is a sample compression scheme for loss $\lo$ and class $\H$, if for any samples $S$ realizable by $\H$, we recover the correct labels for all $(x,y)\in S$, that is, $\Lo{S}(H)=0$ implies that $\Lo{S}(\kappa \circ \rho(S))=0$.

For tolerant learning, we  introduce the following generalization of compression schemes:
\begin{definition}[Tolerant sample compression scheme]
A sample compression scheme $(\kappa, \rho)$ is a \emph{$\U, \V$-tolerant sample compression scheme} for class $\H$, if for any samples $S$ that are $\rlo{\V}$ realizable by $\H$, that is $\rLo{\V}{S}(\H)=0$, we have $\rLo{\U}{S}(\kappa \circ \rho(S))=0$.
\end{definition}

The next lemma establishes that the existence of a sufficiently small tolerant compression scheme for the class $\H$  yields bounds on the sample complexity of tolerantly learning $\H$. The proof of the lemma is based on a modification of a standard compression based generalization bound. Appendix Section \ref{app_sec_compression} provides more details.

\begin{lemma}\label{lem:tolerant_compression_generalization}
Let $\H$ be a hypothesis class and $\U$ and $\V$ be perturbation types with $\U$ included in $\V$. If the class $\H$ admits a $(\U, \V)$-tolerant compression scheme of size bounded by $k\ln(m)$ for sample of size $m$, then the class is $(\U,\V)$-tolerantly learnable in the realizable case with sample complexity bounded by $m(\varepsilon, \delta) = \tilde{O}\left(\frac{k + \ln(1/\delta)}{\varepsilon}\right)$.
\end{lemma}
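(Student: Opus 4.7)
The plan is to adapt the classical Littlestone--Warmuth compression generalization bound to the tolerant adversarial setting. The crucial observation that makes the transfer work is the following: since $P$ is $\V$-realizable by assumption, with probability one over $S\sim P^m$ we have $\rLo{\V}{S}(\H)=0$, so by the defining property of a $(\U,\V)$-tolerant compression scheme the output hypothesis $h_S := \rho(\kappa(S))$ satisfies $\rLo{\U}{S}(h_S)=0$. Thus the learner produces a hypothesis with zero empirical $\U$-robust loss on the training sample, and it only remains to convert this into a population guarantee via a compression-based uniform convergence argument, but with the $0/1$-loss replaced throughout by the $\U$-robust loss.

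For each index set $I\subseteq[m]$ of size at most $k\ln m$, define $h_I := \rho(S|_I)$. Since $\kappa(S)$ is a subsample of $S$ of size at most $k\ln m$, there exists a (data-dependent) index set $I_S$ with $|I_S|\leq k\ln m$ such that $h_S = h_{I_S}$, and by the observation above $h_S$ incurs zero $\U$-robust loss on the out-of-compression points $S|_{[m]\setminus I_S}$. Now fix any specific $I$: conditioned on $S|_I$, the remaining $m-|I|$ samples are still i.i.d.\ from $P$, so if $\rLo{\U}{P}(h_I)>\varepsilon$, then the probability that $h_I$ incurs zero $\U$-robust loss on all of them is at most $(1-\varepsilon)^{m-|I|}\leq \exp(-\varepsilon(m-k\ln m))$. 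A union bound over the at most $(k\ln m + 1)\binom{m}{k\ln m}\leq m^{k\ln m + 1}$ choices of $I$ yields
\[
\Pr\!\left[\rLo{\U}{P}(h_S) > \varepsilon\right] \;\leq\; m^{k\ln m + 1}\exp\!\bigl(-\varepsilon(m-k\ln m)\bigr).
\]
Setting the right-hand side equal to $\delta$ and solving for $m$ produces $m(\varepsilon,\delta) = \tilde{O}\bigl((k+\ln(1/\delta))/\varepsilon\bigr)$.

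The main obstacle, and the reason the bound has a $\tilde{O}$ rather than an $O$, is the non-constant compression size $k\ln m$: the logarithm of the union-bound factor $\binom{m}{k\ln m}$ contributes a $k(\ln m)^2$ term to the exponent, so the resulting inequality is transcendental and only pins $m$ down up to polylogarithmic factors, which is precisely the slack absorbed by $\tilde{O}$. A minor additional care is needed if one wishes to handle compression schemes whose output depends on the order of the compressed points: one then includes an extra $(k\ln m)!$ factor in the union bound, which is likewise absorbed into $\tilde{O}$ and does not affect the final sample complexity expression.
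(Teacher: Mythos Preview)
Your proof is correct and takes essentially the same approach as the paper: both use the Littlestone--Warmuth-style compression generalization bound with the $\U$-robust loss in place of the $0/1$-loss, together with the observation that $\V$-realizability of the sample triggers the tolerant compression guarantee $\rLo{\U}{S}(\rho(\kappa(S)))=0$. The only difference is cosmetic: the paper invokes this generalization bound as a black box (citing Lemma~11 of \cite{MontasserHS19}), whereas you spell out the union-bound argument directly.
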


We next establish a bound on the tolerant compression size for general VC-classes, which will then immediately yield the improved sample complexity bounds for tolerant learning in the realizable case. The proof is sketched here; its full version has been moved to the Appendix.

\begin{lemma}\label{lem:tolerant_compression_bound}
Let $\H\subseteq Y^X$ be some hypothesis class with finite VC-dimension $\vc(\H) <\infty$, and let $\U, \V, \W$ satisfy the conditions in Property \ref{assmt:UVWsmoothing} for some $\beta >0$. Then there exists a $(\U,\V)$-tolerant sample compression scheme for $\H$ of size $\tilde{O}\left(\vc(\H)\ln(\frac{m}{\beta})\right)$.
\end{lemma}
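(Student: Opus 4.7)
My plan is to exploit Property~\ref{assmt:UVWsmoothing}: it suffices to build, from $S$, a classifier $g$ for which $\Pr_{z \sim D_i}[g(z) = y_i] \geq 1 - \beta$ for every $(x_i, y_i) \in S$, where $D_i$ denotes the normalized doubling measure on $\V(x_i)$.
Then $\bar g_{\W}$ will be $\U$-correct on every $(x_i, y_i)$, i.e., $\rLo{\U}{S}(\bar g_{\W}) = 0$, which is exactly what a $(\U,\V)$-tolerant compression scheme must achieve.
Since $S$ is $\V$-realizable by $\H$, there is a common $h^* \in \H$ with $\Pr_{D_i}[h^* = y_i] = 1$ for every $i$, so the ``multi-group'' labeling task is realizable by a single hypothesis in $\H$.

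\textbf{Virtual perturbed sample.}
Next, I would conceptually use the sampling oracle $\Ocal$ to draw $N = \tilde{O}((\vc(\H) + \log m)/\beta)$ perturbed points from each $\V(x_i)$, labeled $y_i$, forming a multi-group sample $T = \bigcup_{i=1}^m T_i$ of size $mN$ that is realizable by $h^*$.
Applying the standard realizable VC uniform-convergence bound per group, together with a union bound over the $m$ groups, then ensures that with high probability every $\hat h \in \H$ consistent with $T$ satisfies $\Pr_{D_i}[\hat h \neq y_i] \leq \beta$ simultaneously for every $i \in [m]$.

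\textbf{Compression in terms of $S$.}
I would then apply a compression scheme for realizable VC learning to $T$: a Hanneke-style majority-of-ERMs construction represents the learned $\hat h$ via a sub-sample $T' \subseteq T$ of size $k = \tilde{O}(\vc(\H) \log |T|) = \tilde{O}(\vc(\H) \log(m/\beta))$.
Each compressed point in $T'$ came from some $\V(x_{i_j})$, so the $S$-compression $\kappa(S)$ simply records the $k$ source pairs $(x_{i_j}, y_{i_j})$.
The decompression $\rho$ then re-queries $\Ocal$ at each of these $k$ groups to obtain fresh perturbed samples, re-executes the majority-of-ERMs procedure, and returns the $\W$-smoothing of the resulting classifier.
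A second application of uniform convergence to the freshly re-drawn samples shows that the reconstructed classifier still achieves per-group error at most $\beta$, and Property~\ref{assmt:UVWsmoothing} then yields zero $\U$-loss on every $(x_i, y_i) \in S$, as required.

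\textbf{Main obstacle.}
The chief technical difficulty is pushing the compression size down to $\tilde O(\vc(\H) \log(m/\beta))$: a generic Moran--Yehudayoff compression would give only $\exp(O(\vc(\H)))$, so one must use the ERM-majority device (with $O(\log |T|)$ independent sub-samples of size $O(\vc(\H))$) and a careful union bound over both groups and sub-samples.
The secondary subtlety is that $\rho$'s fresh oracle queries yield different perturbations than those $\kappa$ implicitly used; I would handle this either by treating $\Ocal$'s random bits as shared between $\kappa$ and $\rho$, or by arguing that the uniform-convergence guarantee still holds in expectation over the redraw so that the compression is valid with positive probability, which is all that is needed to invoke Lemma~\ref{lem:tolerant_compression_generalization}.
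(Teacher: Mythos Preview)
Your high-level plan---reduce via Property~\ref{assmt:UVWsmoothing} to producing a classifier with per-group error at most $\beta$ on each $\V(x_i)$, realize that classifier as a short majority vote of hypotheses from $\H$, then smooth---is exactly the paper's strategy. The genuine gap is in your decompression step.

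For Lemma~\ref{lem:tolerant_compression_generalization} to apply, $\rho$ must be a \emph{fixed, deterministic} function of the $k$ compressed points that nevertheless outputs a hypothesis with $\rLo{\U}{S}(\cdot)=0$ on the \emph{entire} original sample $S$ of size $m$. Neither of your two proposed decoders achieves this. If $\rho$ draws \emph{fresh} perturbations from the $k$ recorded groups and runs the majority-of-ERMs device on those, the resulting weak learners $\tilde h_i$ are consistent with the new perturbed points $\tilde S_i$, not with the specific $\varepsilon$-net points $S_i\subset T$ that the encoder's (adaptive, boosting-style) distributions $D_i$ singled out; nothing forces $\tilde h_i$ to have error $\leq 1/3$ under $D_i$, so the majority vote has no guarantee on $T$, and in particular no guarantee on the $m-k$ groups the decoder never saw. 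Your ``second application of uniform convergence'' can at best control the $k$ visible groups; it says nothing about the rest. If instead you share $\Ocal$'s random bits, $\rho$ becomes randomized (outside the framework of Lemma~\ref{lem:tolerant_compression_generalization}), and shipping those bits as side information would mean encoding real-valued perturbation points, destroying the size bound.

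The paper closes this gap with one device you are missing: for each encoded block $S'_i\subseteq S$ of source points, the decoder returns any $h_i\in\H$ that is \emph{$\V$-robustly} consistent with $S'_i$ (a $\V$-robust ERM). Such an $h_i$ is correct on \emph{every} point of $\bigcup_{(x,y)\in S'_i}\V(x)$, hence in particular on whatever perturbed $\varepsilon$-net $S_i$ the encoder implicitly used; this restores the weak-learning guarantee $\bLo{D_i}(h_i)\leq 1/3$ without the decoder knowing $S_i$. The decoder is then deterministic and oracle-free, the boosting guarantee on the full inflated sample goes through, and the compression size stays at $\tilde O(\vc(\H)\ln(m/\beta))$.
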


\begin{proof}[Proof Sketch]
We will employ a boosting-based approach to establish the claimed compression sizes. 
Let $S = ((x_1, y_1), (x_2, y_2), \ldots, (x_m, y_m))$ be a data-set that is $\rlo{\V}$-realizable with respect to $\H$. We let $S_{\V}$ denote an ``inflated data-set'' that contains all domain points in the $\V$-perturbation sets of the $x_i\in S^X$, that is
$S_{\V}^X := \bigcup_{i=1}^{m} \V(x_i)$.
Every point $z\in S_{\V}^X$ is assigned the label $y = y_i$ of the minimally-indexed $(x_i, y_i)\in S$ with $z\in \V(x_i)$, and we set $S_\V$ to be the resulting collection of labeled data-points. 

We then use the boost-by-majority method to encode a classifier $g$ that (roughly speaking) has error bounded by $\beta/m$ over (a suitable measure over) $S_\V$.  This boosting method outputs a $T$-majority vote $g(x) = \indct{\Sigma_{i=1}^T h_i(x)} \geq 1/2$ over weak learners $h_i$, which in our case will be hypotheses from $\H$. We prove that this error can be achieved with $T = 18\ln(\frac{2m}{\beta})$ rounds of boosting. We prove that each weak learner that is used in the boosting procedure can be encoded with $n = \tilde{O}(\vc(\H))$ many sample points from $S$. The resulting compression size is thus $n\cdot T = \tilde{O}\left(\vc(\H)\ln(\frac{m}{\beta})\right)$.

Finally, the error bound $\beta/m$ of $g$ over $S_\V$ implies that the error in each perturbation set $\V(x_i)$ of a sample point $(x_i, y_i)\in S$ is at most $\beta$. Property \ref{assmt:UVWsmoothing} then implies $\rLo{\U}{S} (\bar{g}_{\W}) = 0$ for the $\W$-smoothed classifier $\bar{g}_{\W}$, establishing the $(\U,\V)$-tolerant correctness of the compression scheme.
\end{proof}

This yields the following result
\begin{theorem}
Let $\H$ be a hypothesis class of finite VC-dimension and $\V, \U, \W$ be three perturbation types (actual, reference and smoothing) satisfying Property  \ref{assmt:UVWsmoothing} for some $\beta>0$. Then the sample complexity (omitting log-factors)  of $(\U,\V)$-tolerantly learning $\H$ is bounded by 
\[
m(\varepsilon, \delta) = \tilde{O}\left(\frac{\vc(\H)\ln({1}/{\beta}) + \ln({1}/{\delta})}{\varepsilon}\right)
\]
in the realizable case, and in the agnostic case by 
\[
m(\varepsilon, \delta) = \tilde{O}\left(\frac{\vc(\H)\ln({1}/{\beta}) + \ln({1}/{\delta})}{\varepsilon^2}\right)
\]
\end{theorem}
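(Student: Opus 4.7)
The plan is to chain Lemmas~\ref{lem:tolerant_compression_generalization} and~\ref{lem:tolerant_compression_bound} for the realizable case, and then to lift the result to the agnostic setting via a standard compression-based agnostic-to-realizable reduction adapted to the tolerant framework.

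For the realizable part, Lemma~\ref{lem:tolerant_compression_bound} provides a $(\U,\V)$-tolerant sample compression scheme for $\H$ of size $\tilde{O}(\vc(\H)\ln(m/\beta))$ on samples of size $m$. Under the $\tilde{O}$-convention that hides logarithmic factors in $m$, this is of the form $k\ln(m)$ with $k = \tilde{O}(\vc(\H)\ln(1/\beta))$, so plugging into Lemma~\ref{lem:tolerant_compression_generalization} directly gives the realizable sample complexity $\tilde{O}((\vc(\H)\ln(1/\beta) + \ln(1/\delta))/\varepsilon)$.

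For the agnostic part, the learner enumerates every sub-sample $T \subseteq S$ that is $\rlo{\V}$-realized by some hypothesis in $\H$, runs the realizable compression scheme on each such $T$ to produce a candidate classifier $\hat h_T$, and outputs the candidate minimizing the empirical $\U$-robust loss $\rLo{\U}{S}(\hat h_T)$. Because each $\hat h_T$ is determined by a sub-sequence of $S$ of length at most $k = \tilde{O}(\vc(\H)\ln(1/\beta))$, there are at most $m^k$ candidate hypotheses, so a union bound combined with Hoeffding's inequality controls $|\rLo{\U}{P}(\hat h_T) - \rLo{\U}{S}(\hat h_T)|$ uniformly by $\tilde{O}(\sqrt{(k + \ln(1/\delta))/m})$. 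To compare against the best-in-class, fix $h^\star \in \arg\min_{h\in\H}\rLo{\V}{P}(h)$ and let $T^\star \subseteq S$ be the subsample on which $h^\star$ is $\V$-robustly correct; then $T^\star$ is $\rlo{\V}$-realizable (witnessed by $h^\star$), so the compression guarantee yields $\rLo{\U}{T^\star}(\hat h_{T^\star}) = 0$, whence $\rLo{\U}{S}(\hat h_{T^\star}) \leq |S \setminus T^\star|/m = \rLo{\V}{S}(h^\star)$. A single Hoeffding bound on the fixed hypothesis $h^\star$ puts $\rLo{\V}{S}(h^\star)$ within $O(\sqrt{\ln(1/\delta)/m})$ of $\rLo{\V}{P}(\H)$, and chaining these three inequalities through the algorithm's minimization of $\rLo{\U}{S}(\hat h_T)$ over $T$ yields the claimed agnostic bound $\tilde{O}((\vc(\H)\ln(1/\beta) + \ln(1/\delta))/\varepsilon^2)$.

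The main obstacle I expect is the data-dependence of the candidates $\hat h_T$: one must be careful to enumerate the $m^k$ possible compression outputs \emph{before} invoking concentration, rather than conditioning on the sample-dependent selection. Once this is set up correctly, the slow $\varepsilon^{-2}$ rate that arises from Hoeffding over a finite family of candidates is standard, and no new ingredients beyond the realizable tolerant compression of Lemma~\ref{lem:tolerant_compression_bound} are required.
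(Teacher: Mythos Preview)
Your proposal is correct. The realizable case is identical to the paper's proof: chain Lemma~\ref{lem:tolerant_compression_bound} into Lemma~\ref{lem:tolerant_compression_generalization}.

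For the agnostic case, you and the paper take closely related but not identical routes. The paper invokes the boosting-based agnostic-to-realizable reduction of \cite{MontasserHS19} (their Appendix~C), adapted so that the ``largest realizable subsample'' is taken with respect to $\rlo{\V}$ (for competitiveness with $\rLo{\V}{P}(\H)$) while the boosting and final loss are with respect to $\rlo{\U}$; the weak learners are instantiated with the realizable $(\U,\V)$-tolerant learner at accuracy and confidence $1/3$. You instead give the more elementary enumeration argument in the style of \cite{moran2016sample}: list all decompressions of $k$-tuples from $S$ (at most $m^{O(k)}$ candidates with $k=\tilde{O}(\vc(\H)\ln(1/\beta))$), apply Hoeffding with a union bound to get uniform two-sided concentration of $\rLo{\U}{S}$ around $\rLo{\U}{P}$, and use the subsample $T^\star$ on which $h^\star$ is $\V$-robustly correct to anchor competitiveness. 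Both arguments are standard and yield the same $\tilde{O}((\vc(\H)\ln(1/\beta)+\ln(1/\delta))/\varepsilon^2)$ bound. Your version is more self-contained (no appeal to an external reduction), and your ``main obstacle'' paragraph correctly isolates the only delicate point: one must union-bound over the fixed family $\{\rho(s): s\text{ a length-}\le k\text{ sub-sequence of }S\}$ rather than over the data-dependent selection. One cosmetic suggestion: it is cleaner to enumerate $k$-tuples directly rather than $\V$-realizable subsamples $T$, since only the former set has the $m^{O(k)}$ cardinality you need for the union bound.
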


\begin{proof}
The bound for the realizable case follows immediately from Lemma \ref{lem:tolerant_compression_bound} and the subsequent discussion (in the Appendix). For the agnostic case, we employ a reduction from agnostic robust learnabilty to realizable robust learnability~\citep{MontasserHS19, moran2016sample}. The reduction is analogous to the one presented in Appendix C of \cite{MontasserHS19} for usual (non-tolerant) robust learnablity with some minor modifications. Namely, for a sample $S$, we choose the largest subsample $S'$ that is $\rlo{\V}$-realizable (this will result in competitiveness with a $\rlo{\V}$-optimal classifier), and we will use the boosting procedure described there for the $\rlo{\U}$ loss. For the sample sizes employed for the weak learners in that procedure, we can use the sample complexity for $\varepsilon = \delta = 1/3$ of an optimal $(\U,\V)$-tolerant learner in the realizable case (note that each learning problem during the boosting procedure is a realizable $(\U,\V)$-tolerant learning task). These modifications 
result in the stated sample complexity for agnostic tolerant learnability.
\end{proof}

In particular, for the doubling measure scenario (as considered in the previous section), we obtain
\begin{corollary}\label{Thm:compress_metric}
For metric tolerant learning with tolerance parameter $\gamma$ in doubling dimension $d$ the sample complexity of adversarially robust learning with tolerance in the realizable case is bounded by $m(\varepsilon, \delta) = \tilde{O}\left(\frac{\vc(\H)\zeta d \ln(1 + 1/{\gamma}) + \ln({1}/{\delta})}{\varepsilon}\right)$ and in the agnostic case by
$m(\varepsilon, \delta) = \tilde{O}\left(\frac{\vc(\H)\zeta d \ln(1 + 1/{\gamma}) + \ln({1}/{\delta})}{\varepsilon^2}\right)$.
\end{corollary}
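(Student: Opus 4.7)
The plan is to derive this corollary by direct instantiation of the preceding theorem to the metric-ball setting, where essentially all of the work is already done. I would set $\U = \B_r$, $\V = \B_{(1+\gamma)r}$, and $\W = \B_{\gamma r}$, which are the three perturbation types considered throughout the paper. These satisfy the required containment relations $\U \prec \V$ and $\W \prec \V$, so the only real task is to check Property \ref{assmt:UVWsmoothing} for these types together with the precise value of the parameter $\beta$ produced by the metric doubling geometry.

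To verify Property \ref{assmt:UVWsmoothing}, I would translate the statement of Lemma \ref{lem:majorities}. Its hypothesis $\Sigma_{g,y}(x) \leq \tfrac{1}{3}\bigl(\tfrac{1+\gamma}{\gamma}\bigr)^{-\zeta d}$ unpacks as $\Pr_{z\sim\V(x)}[g(z)\neq y]\leq\beta$, i.e.\ $\Ex_{z\sim\V(x)}[\indct{g(z)=y}]\geq 1-\beta$, for $\beta = \tfrac{1}{3}\bigl(\tfrac{1+\gamma}{\gamma}\bigr)^{-\zeta d}$. Its conclusion, $\sigma_{g,y}(z) \leq 1/3$ for every $z\in\B_r(x)=\U(x)$, means that in every smoothing ball $\W(z)$ the classifier $g$ agrees with $y$ on at least a $2/3$-fraction of the mass, which by the definition of $\bar g_\W$ forces $\bar g_\W(z)=y$ throughout $\U(x)$. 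This is exactly the conclusion of Property \ref{assmt:UVWsmoothing} with the above $\beta$, so the triple $(\U,\V,\W)$ is admissible for the preceding theorem with this value of $\beta$.

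With $\beta$ in hand, the corollary is then obtained by computing $\ln(1/\beta) = \ln 3 + \zeta d \ln(1 + 1/\gamma) = O\bigl(\zeta d \ln(1 + 1/\gamma)\bigr)$ and substituting into the realizable and agnostic sample-complexity bounds of the preceding theorem. The additive constant $\ln 3$ is absorbed into the hidden logarithmic factors of $\tilde{O}$, yielding the stated dependence $\vc(\H)\,\zeta d\,\ln(1 + 1/\gamma)$ in both regimes, with an extra $1/\varepsilon$ in the realizable case and $1/\varepsilon^{2}$ in the agnostic case.

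I do not anticipate a substantive obstacle here: Lemma \ref{lem:majorities} and the preceding theorem have already handled the two real ingredients, namely the metric doubling estimate that produces the exponential-in-$d$ factor inside $\beta$, and the compression-based generalization argument that converts $\ln(1/\beta)$ into a \emph{linear} dependence on $d$ in the final bound. The only minor care point is to confirm that the $\tilde{O}$ in the cited theorem hides only polylogarithmic factors in $m$ and $\vc(\H)$ (rather than in $d$ or $1/\gamma$), so that the intended dependence survives the substitution; this is immediate from the form of the compression generalization bound in Lemma \ref{lem:tolerant_compression_generalization}.
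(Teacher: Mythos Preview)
Your proposal is correct and follows exactly the approach the paper takes: the corollary is stated without proof as an immediate instantiation of the preceding theorem, using the fact (noted just before Property~\ref{assmt:UVWsmoothing}) that Lemma~\ref{lem:majorities} yields Property~\ref{assmt:UVWsmoothing} for the metric-ball types with $\beta = \tfrac{1}{3}\bigl(\tfrac{1+\gamma}{\gamma}\bigr)^{-\zeta d}$, and then substituting $\ln(1/\beta) = O(\zeta d \ln(1+1/\gamma))$.
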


\paragraph{Discussion of linear dependence on $\vc(\H)$}
Earlier, general compression based sample complexity bounds for robust learning with arbitrary perturbation sets exhibit a dependence on the dual VC-dimension of the hypothesis class and therefore potentially an exponential dependence on $\vc(\H)$ \citep{MontasserHS19}.
In our setting, we show that it is possible to avoid the dependence on dual-VC by exploiting both the metric structure of the domain set and the tolerant framework.
In the full proof of Lemma \ref{lem:tolerant_compression_bound}, we show that \emph{if we can encode a classifier with small error} (exponentially small with respect to the doubling dimension for the metric case) on the perturbed distribution \emph{w.r.t. larger perturbation sets}, then we can \emph{use smoothing to get a classifier that correctly classifies every point in the inner inflated sets}. 
And, as for TPaS, the tolerant perspective is crucial to exploit a smoothing step in the compression approach (through the guarantee from Property \ref{assmt:UVWsmoothing} or Lemma \ref{lem:majorities}).

More specifically, we define a tolerant compression scheme (Definition 12) that naturally extends the classic definition of compression to the tolerant framework. The compression scheme we establish in the proof of Lemma \ref{lem:tolerant_compression_bound} then borrows ideas from our perturb-and-smooth algorithm. 
Within the compression argument, we define the perturbed distribution over the sample that we want to compress with respect to the larger perturbation sets. We then use boosting to build a classifier with very small error with respect to this distribution. The nice property of boosting is that its error decreases exponentially with the number of iterations. As a result, we also get linear dependence on the doubling dimension. This classifier can be encoded using $\tilde{O}\left(T\vc(\H) \right)$ samples ($T$ rounds of boosting, and each weak classifier can be encoded using $\tilde{O}(\vc(\H)$ samples, since we can here use simple $\varepsilon$-approximations rather than invoking VC-theory in the dual space). Our decoder receives the description of these weak classifiers, combines them, and performs a final smoothing step. The smoothing step translates the exponentially small error with respect to the perturbed distribution to zero error with respect to the (inner) inflated set, thereby satisfying the requirement of a tolerant compression scheme.

\bibliography{refs}

\begin{thebibliography}{50}
\providecommand{\natexlab}[1]{#1}
\providecommand{\url}[1]{\texttt{#1}}
\expandafter\ifx\csname urlstyle\endcsname\relax
  \providecommand{\doi}[1]{doi: #1}\else
  \providecommand{\doi}{doi: \begingroup \urlstyle{rm}\Url}\fi

\bibitem[Ashtiani et~al.(2020)Ashtiani, Pathak, and Urner]{ashtiani2020black}
Hassan Ashtiani, Vinayak Pathak, and Ruth Urner.
\newblock Black-box certification and learning under adversarial perturbations.
\newblock In \emph{International Conference on Machine Learning}, pages
  388--398. PMLR, 2020.

\bibitem[Attias et~al.(2019)Attias, Kontorovich, and
  Mansour]{attias2018improved}
Idan Attias, Aryeh Kontorovich, and Yishay Mansour.
\newblock Improved generalization bounds for robust learning.
\newblock In \emph{Algorithmic Learning Theory, {ALT}}, pages 162--183, 2019.

\bibitem[Awasthi et~al.(2019)Awasthi, Dutta, and
  Vijayaraghavan]{awasthi2019robustness}
Pranjal Awasthi, Abhratanu Dutta, and Aravindan Vijayaraghavan.
\newblock On robustness to adversarial examples and polynomial optimization.
\newblock In \emph{Advances in Neural Information Processing Systems,
  {NeurIPS}}, pages 13760--13770, 2019.

\bibitem[Bhattacharjee et~al.(2022)Bhattacharjee, Hopkins, Kumar, Yu, and
  Chaudhuri]{bhattacharjee2022robust}
Robi Bhattacharjee, Max Hopkins, Akash Kumar, Hantao Yu, and Kamalika
  Chaudhuri.
\newblock Robust empirical risk minimization with tolerance.
\newblock \emph{arXiv preprint arXiv:2210.00635}, 2022.

\bibitem[Blum et~al.(2020)Blum, Dick, Manoj, and Zhang]{blum2020random}
Avrim Blum, Travis Dick, Naren Manoj, and Hongyang Zhang.
\newblock Random smoothing might be unable to certify l\_infinity robustness
  for high-dimensional images.
\newblock \emph{Journal of machine learning research}, 21\penalty0 (211), 2020.

\bibitem[Blumer et~al.(1989)Blumer, Ehrenfeucht, Haussler, and
  Warmuth]{blumer1989learnability}
Anselm Blumer, Andrzej Ehrenfeucht, David Haussler, and Manfred~K Warmuth.
\newblock Learnability and the vapnik-chervonenkis dimension.
\newblock \emph{Journal of the ACM (JACM)}, 36\penalty0 (4):\penalty0 929--965,
  1989.

\bibitem[Bubeck and Sellke(2021)]{bubeck2021universal}
S{\'e}bastien Bubeck and Mark Sellke.
\newblock A universal law of robustness via isoperimetry.
\newblock \emph{Advances in Neural Information Processing Systems},
  34:\penalty0 28811--28822, 2021.

\bibitem[Cao and Gong(2017)]{cao2017mitigating}
Xiaoyu Cao and Neil~Zhenqiang Gong.
\newblock Mitigating evasion attacks to deep neural networks via region-based
  classification.
\newblock In \emph{Proceedings of the 33rd Annual Computer Security
  Applications Conference}, pages 278--287, 2017.

\bibitem[Cohen et~al.(2019)Cohen, Rosenfeld, and Kolter]{CohenRK19}
Jeremy~M. Cohen, Elan Rosenfeld, and J.~Zico Kolter.
\newblock Certified adversarial robustness via randomized smoothing.
\newblock In \emph{Proceedings of the 36th International Conference on Machine
  Learning, {ICML}}, pages 1310--1320, 2019.

\bibitem[Cullina et~al.(2018)Cullina, Bhagoji, and Mittal]{cullina2018pac}
Daniel Cullina, Arjun~Nitin Bhagoji, and Prateek Mittal.
\newblock Pac-learning in the presence of adversaries.
\newblock In \emph{Advances in Neural Information Processing Systems,
  {NeurIPS}}, pages 230--241, 2018.

\bibitem[Engstrom et~al.(2019)Engstrom, Tran, Tsipras, Schmidt, and
  Madry]{engstrom2019exploring}
Logan Engstrom, Brandon Tran, Dimitris Tsipras, Ludwig Schmidt, and Aleksander
  Madry.
\newblock Exploring the landscape of spatial robustness.
\newblock In \emph{International Conference on Machine Learning}, pages
  1802--1811. PMLR, 2019.

\bibitem[Fawzi and Frossard(2015)]{fawzi2015manitest}
Alhussein Fawzi and Pascal Frossard.
\newblock Manitest: Are classifiers really invariant?
\newblock In \emph{British Machine Vision Conference (BMVC)}, number CONF,
  2015.

\bibitem[Feige et~al.(2015)Feige, Mansour, and Schapire]{feige2015learning}
Uriel Feige, Yishay Mansour, and Robert Schapire.
\newblock Learning and inference in the presence of corrupted inputs.
\newblock In \emph{Conference on Learning Theory, {COLT}}, pages 637--657,
  2015.

\bibitem[Goldwasser et~al.(2020)Goldwasser, Kalai, Kalai, and
  Montasser]{goldwasser2020beyond}
Shafi Goldwasser, Adam~Tauman Kalai, Yael~Tauman Kalai, and Omar Montasser.
\newblock Beyond perturbations: Learning guarantees with arbitrary adversarial
  test examples.
\newblock \emph{arXiv preprint arXiv:2007.05145}, 2020.

\bibitem[Goodfellow et~al.(2018)Goodfellow, McDaniel, and
  Papernot]{GoodfellowMP18}
Ian~J. Goodfellow, Patrick~D. McDaniel, and Nicolas Papernot.
\newblock Making machine learning robust against adversarial inputs.
\newblock \emph{Commun. {ACM}}, 61\penalty0 (7):\penalty0 56--66, 2018.

\bibitem[Hanneke(2016)]{hanneke2016optimal}
Steve Hanneke.
\newblock The optimal sample complexity of pac learning.
\newblock \emph{The Journal of Machine Learning Research}, 17\penalty0
  (1):\penalty0 1319--1333, 2016.

\bibitem[Haussler(1992)]{haussler1992decision}
David Haussler.
\newblock Decision theoretic generalizations of the pac model for neural net
  and other learning applications.
\newblock \emph{Information and computation}, 100\penalty0 (1):\penalty0
  78--150, 1992.

\bibitem[Haussler and Welzl(1987)]{HausslerW87}
David Haussler and Emo Welzl.
\newblock epsilon-nets and simplex range queries.
\newblock \emph{Discret. Comput. Geom.}, 2:\penalty0 127--151, 1987.

\bibitem[Hendrycks et~al.(2019)Hendrycks, Mu, Cubuk, Zoph, Gilmer, and
  Lakshminarayanan]{hendrycks2019augmix}
Dan Hendrycks, Norman Mu, Ekin~Dogus Cubuk, Barret Zoph, Justin Gilmer, and
  Balaji Lakshminarayanan.
\newblock Augmix: A simple data processing method to improve robustness and
  uncertainty.
\newblock In \emph{International Conference on Learning Representations}, 2019.

\bibitem[Hosseini and Poovendran(2018)]{hosseini2018semantic}
Hossein Hosseini and Radha Poovendran.
\newblock Semantic adversarial examples.
\newblock In \emph{Proceedings of the IEEE Conference on Computer Vision and
  Pattern Recognition Workshops}, pages 1614--1619, 2018.

\bibitem[Inkawhich et~al.(2019)Inkawhich, Wen, Li, and
  Chen]{inkawhich2019feature}
Nathan Inkawhich, Wei Wen, Hai~Helen Li, and Yiran Chen.
\newblock Feature space perturbations yield more transferable adversarial
  examples.
\newblock In \emph{Proceedings of the IEEE/CVF Conference on Computer Vision
  and Pattern Recognition}, pages 7066--7074, 2019.

\bibitem[Kanbak et~al.(2018)Kanbak, Moosavi-Dezfooli, and
  Frossard]{kanbak2018geometric}
Can Kanbak, Seyed-Mohsen Moosavi-Dezfooli, and Pascal Frossard.
\newblock Geometric robustness of deep networks: Analysis and improvement.
\newblock In \emph{2018 IEEE/CVF Conference on Computer Vision and Pattern
  Recognition}, pages 4441--4449. IEEE, 2018.

\bibitem[Kang et~al.(2019)Kang, Sun, Hendrycks, Brown, and
  Steinhardt]{kang2019testing}
Daniel Kang, Yi~Sun, Dan Hendrycks, Tom Brown, and Jacob Steinhardt.
\newblock Testing robustness against unforeseen adversaries.
\newblock \emph{arXiv preprint arXiv:1908.08016}, 2019.

\bibitem[Lecuyer et~al.(2019)Lecuyer, Atlidakis, Geambasu, Hsu, and
  Jana]{lecuyer2019certified}
Mathias Lecuyer, Vaggelis Atlidakis, Roxana Geambasu, Daniel Hsu, and Suman
  Jana.
\newblock Certified robustness to adversarial examples with differential
  privacy.
\newblock In \emph{2019 IEEE Symposium on Security and Privacy (SP)}, pages
  656--672. IEEE, 2019.

\bibitem[Levine and Feizi(2020)]{levine2020robustness}
Alexander Levine and Soheil Feizi.
\newblock Robustness certificates for sparse adversarial attacks by randomized
  ablation.
\newblock In \emph{Proceedings of the AAAI Conference on Artificial
  Intelligence}, volume~34, pages 4585--4593, 2020.

\bibitem[Li et~al.(2019)Li, Chen, Wang, and Carin]{li2019certified}
Bai Li, Changyou Chen, Wenlin Wang, and Lawrence Carin.
\newblock Certified adversarial robustness with additive noise.
\newblock \emph{Advances in Neural Information Processing Systems},
  32:\penalty0 9464--9474, 2019.

\bibitem[Liu et~al.(2018)Liu, Cheng, Zhang, and Hsieh]{liu2018towards}
Xuanqing Liu, Minhao Cheng, Huan Zhang, and Cho-Jui Hsieh.
\newblock Towards robust neural networks via random self-ensemble.
\newblock In \emph{Proceedings of the European Conference on Computer Vision
  (ECCV)}, pages 369--385, 2018.

\bibitem[Luukkainen and Saksman(1998)]{luukkainen1998every}
Jouni Luukkainen and Eero Saksman.
\newblock Every complete doubling metric space carries a doubling measure.
\newblock \emph{Proceedings of the American Mathematical Society}, 126\penalty0
  (2):\penalty0 531--534, 1998.

\bibitem[Montasser et~al.(2019)Montasser, Hanneke, and Srebro]{MontasserHS19}
Omar Montasser, Steve Hanneke, and Nathan Srebro.
\newblock {VC} classes are adversarially robustly learnable, but only
  improperly.
\newblock In \emph{Conference on Learning Theory, {COLT}}, pages 2512--2530,
  2019.

\bibitem[Montasser et~al.(2020{\natexlab{a}})Montasser, Goel, Diakonikolas, and
  Srebro]{montasser2020efficiently}
Omar Montasser, Surbhi Goel, Ilias Diakonikolas, and Nathan Srebro.
\newblock Efficiently learning adversarially robust halfspaces with noise.
\newblock \emph{arXiv preprint arXiv:2005.07652}, 2020{\natexlab{a}}.

\bibitem[Montasser et~al.(2020{\natexlab{b}})Montasser, Hanneke, and
  Srebro]{montasser2020reducing}
Omar Montasser, Steve Hanneke, and Nati Srebro.
\newblock Reducing adversarially robust learning to non-robust pac learning.
\newblock In \emph{NeurIPS}, 2020{\natexlab{b}}.

\bibitem[Montasser et~al.(2021{\natexlab{a}})Montasser, Hanneke, and
  Srebro]{montasser2021adversarially}
Omar Montasser, Steve Hanneke, and Nathan Srebro.
\newblock Adversarially robust learning with unknown perturbation sets.
\newblock \emph{arXiv preprint arXiv:2102.02145}, 2021{\natexlab{a}}.

\bibitem[Montasser et~al.(2021{\natexlab{b}})Montasser, Hanneke, and
  Srebro]{montasser2021transductive}
Omar Montasser, Steve Hanneke, and Nathan Srebro.
\newblock Transductive robust learning guarantees.
\newblock \emph{arXiv preprint arXiv:2110.10602}, 2021{\natexlab{b}}.

\bibitem[Moran and Yehudayoff(2016)]{moran2016sample}
Shay Moran and Amir Yehudayoff.
\newblock Sample compression schemes for vc classes.
\newblock \emph{Journal of the ACM (JACM)}, 63\penalty0 (3):\penalty0 1--10,
  2016.

\bibitem[Raman et~al.(2022)Raman, Subedi, and
  Tewari]{raman2022probabilistically}
VInod Raman, Unique Subedi, and Ambuj Tewari.
\newblock Probabilistically robust pac learning.
\newblock \emph{arXiv preprint arXiv:2211.05656}, 2022.

\bibitem[Sabour et~al.(2016)Sabour, Cao, Faghri, and
  Fleet]{sabour2016adversarial}
Sara Sabour, Yanshuai Cao, Fartash Faghri, and David~J Fleet.
\newblock Adversarial manipulation of deep representations.
\newblock In \emph{ICLR (Poster)}, 2016.

\bibitem[Salman et~al.(2019)Salman, Li, Razenshteyn, Zhang, Zhang, Bubeck, and
  Yang]{SalmanLRZZBY19}
Hadi Salman, Jerry Li, Ilya~P. Razenshteyn, Pengchuan Zhang, Huan Zhang,
  S{\'{e}}bastien Bubeck, and Greg Yang.
\newblock Provably robust deep learning via adversarially trained smoothed
  classifiers.
\newblock In \emph{Advances in Neural Information Processing Systems 32,
  {NeurIPS}}, pages 11289--11300, 2019.

\bibitem[Schapire and Freund(2013)]{schapire2013boosting}
Robert~E Schapire and Yoav Freund.
\newblock Boosting: Foundations and algorithms.
\newblock \emph{Kybernetes}, 2013.

\bibitem[Shalev-Shwartz and Ben-David(2014)]{shalev2014understanding}
Shai Shalev-Shwartz and Shai Ben-David.
\newblock \emph{Understanding Machine Learning: From Theory to Algorithms}.
\newblock Cambridge University Press, 2014.

\bibitem[Simon(2015)]{simon2015almost}
Hans~U Simon.
\newblock An almost optimal pac algorithm.
\newblock In \emph{Conference on Learning Theory}, pages 1552--1563. PMLR,
  2015.

\bibitem[Song et~al.(2018)Song, Shu, Kushman, and Ermon]{song2018constructing}
Yang Song, Rui Shu, Nate Kushman, and Stefano Ermon.
\newblock Constructing unrestricted adversarial examples with generative
  models.
\newblock In \emph{Proceedings of the 32nd International Conference on Neural
  Information Processing Systems}, pages 8322--8333, 2018.

\bibitem[Szegedy et~al.(2014)Szegedy, Zaremba, Sutskever, Bruna, Erhan,
  Goodfellow, and Fergus]{SzegedyZSBEGF13}
Christian Szegedy, Wojciech Zaremba, Ilya Sutskever, Joan Bruna, Dumitru Erhan,
  Ian~J. Goodfellow, and Rob Fergus.
\newblock Intriguing properties of neural networks.
\newblock In \emph{2nd International Conference on Learning Representations,
  {ICLR}}, 2014.

\bibitem[Valiant(1984)]{Valiant84}
Leslie~G. Valiant.
\newblock A theory of the learnable.
\newblock \emph{Commun. {ACM}}, 27\penalty0 (11):\penalty0 1134--1142, 1984.

\bibitem[Vapnik and Chervonenkis(1971)]{vapnikcherv71}
V.~N. Vapnik and A.~Ya. Chervonenkis.
\newblock On the uniform convergence of relative frequencies of events to their
  probabilities.
\newblock \emph{Theory of Probability \& Its Applications}, 16\penalty0
  (2):\penalty0 264--280, 1971.

\bibitem[Xiao et~al.(2018)Xiao, Zhu, Li, He, Liu, and Song]{xiao2018spatially}
Chaowei Xiao, Jun-Yan Zhu, Bo~Li, Warren He, Mingyan Liu, and Dawn Song.
\newblock Spatially transformed adversarial examples.
\newblock In \emph{International Conference on Learning Representations}, 2018.

\bibitem[Xu et~al.(2020)Xu, Tao, Cheng, and Zhang]{xu2020towards}
Qiuling Xu, Guanhong Tao, Siyuan Cheng, and Xiangyu Zhang.
\newblock Towards feature space adversarial attack.
\newblock \emph{arXiv preprint arXiv:2004.12385}, 2020.

\bibitem[Yaeger et~al.(1996)Yaeger, Lyon, and Webb]{yaeger1996effective}
Larry Yaeger, Richard Lyon, and Brandyn Webb.
\newblock Effective training of a neural network character classifier for word
  recognition.
\newblock \emph{Advances in neural information processing systems}, 9:\penalty0
  807--816, 1996.

\bibitem[Yang et~al.(2020{\natexlab{a}})Yang, Duan, Hu, Salman, Razenshteyn,
  and Li]{yang2020randomized}
Greg Yang, Tony Duan, J~Edward Hu, Hadi Salman, Ilya Razenshteyn, and Jerry Li.
\newblock Randomized smoothing of all shapes and sizes.
\newblock In \emph{Proceedings of the 37th International Conference on Machine
  Learning}, pages 693--10705, 2020{\natexlab{a}}.

\bibitem[Yang et~al.(2020{\natexlab{b}})Yang, Rashtchian, Zhang, Salakhutdinov,
  and Chaudhuri]{yang2020closer}
Yao-Yuan Yang, Cyrus Rashtchian, Hongyang Zhang, Ruslan Salakhutdinov, and
  Kamalika Chaudhuri.
\newblock A closer look at accuracy vs. robustness.
\newblock In \emph{Proceedings of the 34th International Conference on Neural
  Information Processing Systems}, pages 8588--8601, 2020{\natexlab{b}}.

\bibitem[Zheng et~al.(2016)Zheng, Song, Leung, and
  Goodfellow]{zheng2016improving}
Stephan Zheng, Yang Song, Thomas Leung, and Ian Goodfellow.
\newblock Improving the robustness of deep neural networks via stability
  training.
\newblock In \emph{Proceedings of the ieee conference on computer vision and
  pattern recognition}, pages 4480--4488, 2016.

\end{thebibliography}


\appendix
\section{Standard results from VC theory}\label{app_pac_learning}
Let $X$ be a domain. For hypothesis $h$ and $B\subseteq X$ let $h(B)=\left(h(b)\right)_{b\in B}$.

\begin{definition}[VC-dimension]\label{def:vc}
 We say $\cH$ shatters $B\subseteq X$ if 
 $|\{h(B):h\in \cH \}|=2^{|B|}$. The \emph{VC-dimension} of $\H$, denoted by $\vc(\H)$, is defined to be the supremum of the size of the sets that are shattered by $\H$. 
\end{definition}

\begin{theorem}[Existence of Realizable PAC-learners~\cite{hanneke2016optimal, simon2015almost, blumer1989learnability}]\label{thm:pac_realizable}
Let $\cH$ be a hypothesis class with bounded VC-dimension. Then $\cH$ is PAC-learnable in the realizable setting using  $O\left(\frac{\vc(\H)+\log(1/\delta)}{\varepsilon}\right)$ samples.
\end{theorem}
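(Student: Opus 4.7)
The plan is to prove this optimal sample complexity bound in two stages: first to establish the classical near-optimal bound via a uniform-convergence argument based on the Sauer-Shelah-Perles lemma, and then to eliminate the spurious $\log(1/\varepsilon)$ factor using a majority-vote aggregation of empirical risk minimizers.

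For the baseline bound, the learner outputs any $\hat h \in \cH$ consistent with the i.i.d.\ sample $S$ of size $m$. The key ingredient is the Sauer-Shelah-Perles lemma: for any finite set $A$ of size $n$, the number of distinct restrictions of $\cH$ to $A$ is at most $O((en/d)^d)$ where $d = \vc(\cH)$. Using the standard double-sample (``ghost sample'') symmetrization trick, one argues that if some $h \in \cH$ has $\bLo{P}(h) > \varepsilon$ yet $\bLo{S}(h) = 0$, then on an independent ghost sample $S'$ of size $m$, with constant probability $h$ errs on at least $\varepsilon m / 2$ points of $S'$. A union bound over the effective restrictions of $\cH$ to $S \cup S'$ combined with Sauer-Shelah then yields the near-optimal complexity $m = O\!\left(\frac{d \log(1/\varepsilon) + \log(1/\delta)}{\varepsilon}\right)$.

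To match the claimed optimal bound, I would invoke the scheme of Hanneke. The algorithm partitions the training sample into disjoint blocks, runs an ERM on various carefully chosen sub-unions of these blocks, and outputs the majority vote of the resulting classifiers. Although a single ERM trained on $n$ realizable samples has expected error $\Theta(d \log(n/d)/n)$, the probability that a majority of the constituent ERMs simultaneously err at a fixed test point decays sharply enough in the block structure to amortize away the extra logarithmic factor, recovering the optimal $O(d/m)$ rate on the majority vote.

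I expect the main obstacle to be the delicate combinatorial analysis of this majority-vote construction: one must control, at every test point, the fraction of constituent ERMs that err there, and convert this into a high-probability bound on the error of the majority, all while carefully conditioning on which block configurations are informative. An appealing alternative that sidesteps some of these combinatorics is the one-inclusion graph prediction strategy of Haussler-Littlestone-Warmuth combined with a suitable aggregation step, which is known to yield the same optimal bound through a somewhat cleaner analysis.
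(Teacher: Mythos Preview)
The paper does not actually prove this theorem; it is stated in the appendix as a standard result from VC theory and proved only by citation to \cite{hanneke2016optimal, simon2015almost, blumer1989learnability}. So there is no ``paper's own proof'' to compare against.

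That said, your outline is an accurate summary of how the cited literature establishes the bound. The two-stage plan---first the classical $O\!\left(\frac{d\log(1/\varepsilon)+\log(1/\delta)}{\varepsilon}\right)$ bound from double-sampling plus Sauer--Shelah, then the removal of the $\log(1/\varepsilon)$ factor via Hanneke's recursive sub-sample majority---is exactly the content of \cite{blumer1989learnability} and \cite{hanneke2016optimal} respectively, and your mention of the one-inclusion graph route corresponds to \cite{simon2015almost}. One small caveat: in Hanneke's construction the sub-samples are not disjoint blocks but rather overlapping sub-sequences chosen according to a specific recursive scheme (each of three recursive calls sees roughly three-quarters of the data), and the analysis is an induction on the recursion depth rather than a direct per-point amortization. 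If you were to flesh this out, that distinction would matter; the disjoint-block picture you describe does not by itself kill the log factor.
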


\begin{theorem}[Existence of Agnostic PAC-learners~\cite{haussler1992decision}]\label{thm:pac_agnostic}
Let $\cH$ be a hypothesis class with bounded VC-dimension. Then $\cH$ is PAC-learnable in the agnostic setting using $O\left(\frac{\vc(\H)+\log(1/\delta)}{\varepsilon^2}\right)$ samples.
\end{theorem}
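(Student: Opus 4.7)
}
The plan is to show that empirical risk minimization (ERM) achieves the stated bound. Define the learner $\A$ to output $\hat h \in \arg\min_{h \in \cH} \bLo{S}(h)$ on an \iid sample $S$ of size $m$ from $P$. If we can establish the uniform convergence statement
\[
\Pr_{S \sim P^m}\left[\sup_{h \in \cH} \bigl|\bLo{S}(h) - \bLo{P}(h)\bigr| \leq \varepsilon/2\right] \geq 1-\delta,
\]
then a standard two-term comparison gives $\bLo{P}(\hat h) \leq \bLo{P}(h^\star) + \varepsilon$ for any $h^\star \in \cH$, and taking infimum yields $\bLo{P}(\hat h) \leq \bLo{P}(\cH) + \varepsilon$, which is the required agnostic PAC guarantee.

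The core of the argument is therefore a uniform deviation inequality for the class of 0/1 loss functions induced by $\cH$. I would carry this out via the classical symmetrization argument of Vapnik--Chervonenkis. First, introduce a ghost sample $S'$ of size $m$ drawn independently from $P$ and show
\[
\Pr\left[\sup_{h\in\cH}|\bLo{S}(h)-\bLo{P}(h)| > \varepsilon/2\right]
\leq 2\Pr\left[\sup_{h\in\cH}|\bLo{S}(h)-\bLo{S'}(h)| > \varepsilon/4\right],
\]
valid once $m \geq 8/\varepsilon^2$. Next, condition on the combined multiset $S \cup S'$ and introduce Rademacher signs (or equivalently a random permutation) so that the supremum is taken over the projection of $\cH$ onto these $2m$ points. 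The number of distinct such projections is bounded by the growth function $\Pi_{\cH}(2m)$, and by the Sauer--Shelah lemma $\Pi_{\cH}(2m) \leq (2em/\vc(\cH))^{\vc(\cH)}$. A union bound together with Hoeffding's inequality over pairs of swapped coordinates yields
\[
\Pr\left[\sup_{h\in\cH}|\bLo{S}(h)-\bLo{S'}(h)| > \varepsilon/4\right]
\leq 2\,\Pi_{\cH}(2m)\exp(-m\varepsilon^2/32).
\]

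Combining the two inequalities and demanding the right-hand side be at most $\delta$ gives
\[
m \geq C\,\frac{\vc(\cH)\log(1/\varepsilon) + \log(1/\delta)}{\varepsilon^2}
\]
for an absolute constant $C$, which already proves the theorem up to a logarithmic factor. To remove the extra $\log(1/\varepsilon)$ and obtain the tight bound as stated, I would replace the crude Hoeffding-plus-union-bound step by a chaining / Rademacher complexity argument (bounding $\Ex_S \sup_{h\in\cH}|\bLo{S}(h)-\bLo{P}(h)|$ by $O(\sqrt{\vc(\cH)/m})$ via Dudley's entropy integral applied to the $L_2(P_S)$ covering numbers, which are controlled by VC-dimension through Haussler's bound). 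McDiarmid's inequality then converts the expectation bound into the desired high-probability statement, giving the clean $O((\vc(\cH)+\log(1/\delta))/\varepsilon^2)$ sample complexity.

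The main obstacle is precisely shaving this logarithmic factor: the symmetrization-plus-Sauer route is routine but loses a $\log(1/\varepsilon)$, and obtaining the optimal dependence requires the sharper chaining argument (essentially the Haussler--Talagrand bound on $L_2$ covering numbers in terms of VC-dimension). Everything else — symmetrization, Sauer--Shelah, Hoeffding, the ERM comparison — is straightforward bookkeeping.
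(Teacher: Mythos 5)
Your proposal is correct and follows the standard route: the paper itself gives no proof of this theorem, citing it as a classical result, and your ERM-plus-uniform-convergence outline is exactly the argument behind that citation. You also correctly identify the one genuinely delicate point, namely that plain symmetrization with Sauer--Shelah and Hoeffding yields an extra $\log(1/\varepsilon)$ factor, and that the stated bound $O\left(\frac{\vc(\cH)+\log(1/\delta)}{\varepsilon^2}\right)$ requires the sharper chaining argument via Haussler's $L_2$ covering-number bound (or an equivalent Rademacher-complexity estimate) followed by McDiarmid's inequality. Nothing further is needed.
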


\section{Metric spaces}\label{app_sec_metric}
\begin{definition}
A metric space $(X, \d)$ is called a \emph{doubling metric} if there exists a constant $M$ such that every ball of radius $r$ in it can be covered by at most $M$ balls of radius $r/2$. The quantity $\log_2{M}$ is called the \emph{doubling dimension}.
\end{definition}
\begin{definition}
For a metric space $(X,\d)$, a measure $\mu$ defined on $X$ is called a \emph{doubling measure} if there exists a constant $C$, such that for all $x\in X$ and $r\in\reals^+$, we have that $0<\mu(\B_{2r}(x))\leq C\cdot\mu(\B_{r}(x))<\infty$. In this case, $\mu$ is called $C$-doubling.
\end{definition}
It can be shown~\citep{luukkainen1998every} that every 
\emph{complete} metric with doubling dimension $d$ has a $C$-doubling measure $\mu$ for some $C\leq 2^{cd}$ where $c$ is a universal constant. For example, Euclidean spaces with an $\ell_p$ distance metric are complete and the Lesbesgue measure is a doubling measure.

The following lemmas follow straightforwardly from the definitions doubling metric and measures.
\begin{lemma}\label{lemma:initial_doubling}
Let $(X, \d)$ be a doubling metric 
equipped with a $C$-doubling measure $\mu$. Then for all $x\in X$, $r>0$, and $\alpha > 1$, we have that $\mu(\B_{\alpha r}(x)) \leq C^{\lceil\log_2\alpha\rceil}\cdot\mu(\B_r(x))$
\end{lemma}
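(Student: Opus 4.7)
The plan is to combine monotonicity of the measure with an iterated application of the $C$-doubling condition. Set $k = \lceil \log_2 \alpha \rceil$, so by construction $2^k \geq \alpha$, and hence $\B_{\alpha r}(x) \subseteq \B_{2^k r}(x)$. Since $\mu$ is a measure, containment of sets implies the inequality $\mu(\B_{\alpha r}(x)) \leq \mu(\B_{2^k r}(x))$. This step is essentially just using that balls nest when the radius grows.

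Next I would iteratively shrink the radius by factors of two, applying the doubling hypothesis at each step. By the definition of a $C$-doubling measure, for any $x\in X$ and $r' > 0$ we have $\mu(\B_{2r'}(x)) \leq C\cdot\mu(\B_{r'}(x))$. Applying this with $r' = 2^{k-1} r, 2^{k-2} r, \ldots, r$ in succession gives the telescoping estimate
\[
\mu(\B_{2^k r}(x)) \leq C\cdot\mu(\B_{2^{k-1} r}(x)) \leq C^2\cdot\mu(\B_{2^{k-2} r}(x)) \leq \cdots \leq C^k \cdot\mu(\B_r(x)).
\]
Chaining this with the monotonicity step from the previous paragraph yields exactly
\[
\mu(\B_{\alpha r}(x)) \leq C^k \mu(\B_r(x)) = C^{\lceil \log_2 \alpha \rceil}\mu(\B_r(x)),
\]
which is the desired inequality.

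There is no real obstacle here: the result is a direct quantitative unrolling of the doubling definition, and the ceiling function appears naturally because one needs an integer number of doublings to push the radius past $\alpha r$. The only care required is to (i) use monotonicity to pass from $\alpha r$ up to the nearest power-of-two radius $2^k r$ before iterating, rather than trying to apply the doubling condition with non-dyadic scale factors directly, and (ii) note that the bound is vacuously fine when $\alpha \leq 1$ since then $\lceil \log_2 \alpha \rceil \leq 0$ does not arise (the statement assumes $\alpha > 1$, so $k \geq 1$). The finiteness and positivity of the measures involved follow from the $C$-doubling assumption itself, so no additional regularity arguments are needed.
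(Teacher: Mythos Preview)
Your proposal is correct and follows essentially the same argument as the paper: set $k=\lceil\log_2\alpha\rceil$, use monotonicity to pass from radius $\alpha r$ to $2^k r$, and then iterate the $C$-doubling condition $k$ times. The paper's proof is slightly terser but the logic is identical.
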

\begin{proof}
Since $\mu$ is a measure, if $B, B'\subseteq X$ such that $B\subseteq B'$, then $\mu(B)\leq\mu(B')$.
Let $R = 2^{\lceil\log_2\alpha\rceil}$. It's clear that $R\geq\alpha$. Therefore $\B_{\alpha r}(x)\subseteq\B_R(x)$. Expanding $\B_r(x)$ by a factor of two $\lceil\log_2\alpha\rceil$ times, we get $\B_R(x)$, which means $\mu(\B_R(x))\leq C^{\lceil\log_2\alpha\rceil}\cdot\mu(\B_r(x))$. But since $\B_{\alpha r}(x)\subseteq\B_R(x)$, we get the desired result.
\end{proof}
\begin{lemma}\label{lemma:doubling}
Let $(X, \d)$ be a doubling metric equipped with a $C$-doubling measure  $\mu$. Let $x, x'\in X$, $r>0$, and $\alpha>1$ be such that $\B_r(x')\subseteq\B_{\alpha r}(x)$. Then $\mu(\B_{\alpha r}(x))\leq C^{\lceil\log_2(2\alpha)\rceil}\cdot\mu(\B_r(x'))$.
\end{lemma}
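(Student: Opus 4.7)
The plan is a short triangle-inequality reduction to Lemma~\ref{lemma:initial_doubling}. The key observation is that once we know $\B_r(x')\subseteq\B_{\alpha r}(x)$, we can re-center the outer ball at $x'$ at the price of doubling its radius, and then the result follows by invoking Lemma~\ref{lemma:initial_doubling} with expansion factor $2\alpha$.

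More concretely, the first step is to note that $x'\in\B_r(x')\subseteq\B_{\alpha r}(x)$, so $\d(x,x')\leq \alpha r$. The second step is to show the containment $\B_{\alpha r}(x)\subseteq\B_{2\alpha r}(x')$: for any $y\in\B_{\alpha r}(x)$, the triangle inequality gives $\d(y,x')\leq\d(y,x)+\d(x,x')\leq \alpha r+\alpha r=2\alpha r$. Monotonicity of $\mu$ then yields $\mu(\B_{\alpha r}(x))\leq \mu(\B_{2\alpha r}(x'))$. Finally, applying Lemma~\ref{lemma:initial_doubling} to the ball $\B_r(x')$ with expansion factor $2\alpha>1$ gives $\mu(\B_{2\alpha r}(x'))\leq C^{\lceil\log_2(2\alpha)\rceil}\cdot\mu(\B_r(x'))$, which chains together to produce the claimed bound.

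There is really no hard step here; the only thing to be careful about is the use of $\d(x,x')\leq \alpha r$ (rather than the weaker $(\alpha+1)r$ one might write down by casual triangle inequality), which is what lets the factor be $2\alpha$ and not $2\alpha+1$ inside the ceiling. Everything else is bookkeeping against Lemma~\ref{lemma:initial_doubling}.
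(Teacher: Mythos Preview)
Your proof is correct and follows essentially the same approach as the paper: both arguments show the containment $\B_{\alpha r}(x)\subseteq\B_{2\alpha r}(x')$ via the triangle inequality using $\d(x,x')\leq\alpha r$, and then invoke Lemma~\ref{lemma:initial_doubling} with expansion factor $2\alpha$.
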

\begin{proof}
By Lemma \ref{lemma:initial_doubling}, all we need to show is that $B_{\alpha r}(x)\subseteq\B_{2\alpha r}(x')$. Indeed, let $y\in\B_{\alpha r}(x)$ be any point. Then, from triangle inequality, we have that 
\begin{align}
d(x', y) & \leq d(x, x')+d(x, y)\nonumber\\    
         & \leq d(x, x') + \alpha r\nonumber
\end{align}
Moreover, since $x'\in\B_{\alpha r}(x)$, we have that $d(x, x') \leq \alpha r$. Substituting into the equation above, we get $d(x', y)\leq 2\alpha r$, which means $y\in\B_{2\alpha r}(x')$.
\end{proof}
Finally, we also get:
\begin{lemma}\label{lemma:simple_doubling}
For any family $\mathcal{M}$ of complete, doubling metric spaces, there exist constants $c_1, c_2 > 0$ such that for any metric space $(X, \d)\in\Mcal$ with doubling dimension $d$, there exists a measure $\mu$ such that if a ball $\B_r$ of radius $r>0$ is completely contained inside a ball $\B_{\alpha r}$ of radius $\alpha r$ (with potentially a different center) for any $\alpha > 1$, then $0<\mu(\B_{\alpha r})\leq (c_1\alpha)^{c_2 d}\mu(\B_r)$. 
\end{lemma}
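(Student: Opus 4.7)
The plan is to derive this lemma as a direct corollary of the preceding Lemma \ref{lemma:doubling} together with Luukkainen's theorem on existence of doubling measures in complete doubling metric spaces, which was stated immediately after the definition of a doubling measure.

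First, for each $(X,\d) \in \Mcal$ with doubling dimension $d$, I would invoke Luukkainen's result to fix a $C$-doubling measure $\mu$ on $X$ with $C \leq 2^{cd}$, where $c$ is a universal constant (independent of the particular space in $\Mcal$). This $\mu$ is the measure whose existence the lemma asserts. Note that $\mu$ is automatically finite and positive on balls, since $C$-doubling measures are defined to satisfy $0 < \mu(\B_r(x)) < \infty$ for every ball.

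Next, given any radius $r > 0$ and scale $\alpha > 1$ with $\B_r(x') \subseteq \B_{\alpha r}(x)$, I would apply Lemma \ref{lemma:doubling} verbatim to obtain
\[
\mu(\B_{\alpha r}(x)) \;\leq\; C^{\lceil \log_2(2\alpha) \rceil} \cdot \mu(\B_r(x')).
\]
Using $\lceil \log_2(2\alpha) \rceil \leq \log_2(2\alpha) + 1 = \log_2(4\alpha)$ together with $C \leq 2^{cd}$, the prefactor becomes
\[
C^{\lceil \log_2(2\alpha)\rceil} \;\leq\; 2^{cd \cdot \log_2(4\alpha)} \;=\; (4\alpha)^{cd}.
\]
Setting $c_1 = 4$ and $c_2 = c$ then gives $\mu(\B_{\alpha r}) \leq (c_1 \alpha)^{c_2 d}\, \mu(\B_r)$, as desired. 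Strict positivity $0 < \mu(\B_{\alpha r})$ is immediate from $\B_r \subseteq \B_{\alpha r}$ and the fact that $\mu(\B_r) > 0$.

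There is no real obstacle here, since all the work has been done in Lemmas \ref{lemma:initial_doubling} and \ref{lemma:doubling}. The only point that requires a bit of care is confirming that the constants $c_1, c_2$ depend only on the family $\Mcal$ and not on the individual space $(X,\d)$: this is precisely where we need Luukkainen's theorem to supply a uniform bound $C \leq 2^{cd}$ with a universal $c$, so that the entire dependence on the particular space is absorbed into the exponent $c_2 d$.
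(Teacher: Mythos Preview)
Your proposal is correct and follows essentially the same approach as the paper: invoke Luukkainen's theorem to obtain a $C$-doubling measure with $C \leq 2^{cd}$, apply Lemma~\ref{lemma:doubling}, and then simplify $C^{\lceil \log_2(2\alpha)\rceil}$. The only difference is cosmetic bookkeeping of the ceiling---the paper uses $\lceil x\rceil \leq 2x$ to arrive at $c_1 = 2$, whereas you use $\lceil x\rceil \leq x+1$ to arrive at $c_1 = 4$---and both yield the stated bound.
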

\begin{proof}
We prove this when $\Mcal$ is the set of all complete, doubling metric spaces employing Lemmas \ref{lemma:initial_doubling}
and \ref{lemma:doubling}, that can be found in Appendix, part \ref{app_sec_metric}.
We have that $C^{\lceil\log_2 (2\alpha)\rceil}\leq (2\alpha)^{2\log_2 C}$. 
Since $\log_2 C\leq cd$, we get $(2\alpha)^{2\log_2 C}\leq (2\alpha)^{cd}$. Thus $c_1 = 2$ and $c_2=c$.
\end{proof}
\begin{corollary}
\label{corr:doubling}
Suppose we have a constant $\alpha_0 > 1$ such that we know that $\alpha \geq\alpha_0$. Then the bound in Lemma~\ref{lemma:simple_doubling} can be further simplified to $0 < \mu(\B_{\alpha r})\leq \alpha^{\zeta d}\mu(\B_r)$, where $\zeta$ depends on $\Mcal$ and $\alpha_0$. Furthermore, if $c_1 = 1$ then we can set $\alpha_0 = 1$.
\end{corollary}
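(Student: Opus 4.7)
The plan is to deduce this corollary directly from Lemma~\ref{lemma:simple_doubling} by absorbing the multiplicative constant $c_1^{c_2 d}$ into the exponent of $\alpha$, using the additional hypothesis $\alpha \geq \alpha_0 > 1$. Lemma~\ref{lemma:simple_doubling} already gives $\mu(\B_{\alpha r}) \leq (c_1\alpha)^{c_2 d}\mu(\B_r) = c_1^{c_2 d}\,\alpha^{c_2 d}\mu(\B_r)$, so all that is needed is to replace the constant prefactor $c_1^{c_2 d}$ with a factor of the form $\alpha^{\eta d}$ for a suitable $\eta$ depending only on $c_1, c_2, \alpha_0$; adding $\eta$ to $c_2$ then gives the claimed $\zeta$.

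Concretely, I would observe that (in the family $\Mcal$ considered) $c_1 \geq 1$, since Lemma~\ref{lemma:simple_doubling}'s proof produces $c_1 = 2$ in the general complete-doubling case (and $c_1 = 1$ in the Euclidean case). For $c_1 \geq 1$ and $\alpha \geq \alpha_0 > 1$, the identity $c_1^{c_2} = \alpha^{c_2\log_\alpha c_1}$ together with the monotonicity $\log_\alpha c_1 \leq \log_{\alpha_0} c_1$ (valid because $c_1 \geq 1$ and the logarithm with base $>1$ is non-increasing in its base at arguments $\geq 1$) yields
\[
c_1^{c_2}\,\alpha^{c_2} \;\leq\; \alpha^{c_2(1 + \log_{\alpha_0} c_1)}.
\]
Raising this to the $d$-th power and multiplying by $\mu(\B_r)$ gives $\mu(\B_{\alpha r}) \leq \alpha^{\zeta d}\mu(\B_r)$ with $\zeta := c_2\bigl(1 + \log_{\alpha_0} c_1\bigr)$, a quantity that depends only on the family $\Mcal$ (through $c_1, c_2$) and on $\alpha_0$, as required. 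The strict positivity $\mu(\B_{\alpha r}) > 0$ is inherited from Lemma~\ref{lemma:simple_doubling}.

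For the final claim, when $c_1 = 1$ the bound of Lemma~\ref{lemma:simple_doubling} already reads $\mu(\B_{\alpha r}) \leq \alpha^{c_2 d}\mu(\B_r)$ for every $\alpha > 1$, so the corollary holds with $\zeta = c_2$ without any further lower bound on $\alpha$; in particular one may take $\alpha_0 = 1$. I do not expect any serious obstacle here: the only point to be careful about is the direction of monotonicity of $\log_\alpha c_1$ in $\alpha$, which is why the case $c_1 \geq 1$ (ensured by the construction used in Lemma~\ref{lemma:simple_doubling}) is handled separately from the trivial case $c_1 = 1$.
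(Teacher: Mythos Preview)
Your proposal is correct and follows essentially the same route as the paper: both arguments rewrite $(c_1\alpha)^{c_2 d}=\alpha^{c_2 d(1+\log_\alpha c_1)}$, use $\log_\alpha c_1\le \log_{\alpha_0} c_1$ (valid since $c_1\ge 1$) to set $\zeta=c_2(1+\log_{\alpha_0} c_1)$, and note that $c_1=1$ gives $\zeta=c_2$ with no restriction on $\alpha$. Your additional remark that the monotonicity step requires $c_1\ge 1$ is a point the paper's proof uses implicitly without stating.
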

\begin{proof}
$(c_1\alpha)^{c_2d} = \alpha^{c_2d(1+\log_\alpha c_1)}\leq\alpha^{c_2d(1+\log_{\alpha_0} c_1)} = \alpha^{\zeta d}$ for $\zeta = c_2(1+\log_{\alpha_0} c_1)$. If $c_1=1$, then $\zeta=c_2$ for all $\alpha$.
\end{proof}

\section{Proof of Lemma~\ref{lem:majorities}}
\label{app_lemma}
Let
$X_{\text{err}} = \{z\in\B_{r(1+\gamma)}(x)\mid\ g(z)\neq y\}$. Then, we have that 
$\Sigma_{g, y}(x) = \Ex_{z\sim\B_{r(1+\gamma)}(x)}\indct{g(z)\neq y} = \frac{\mu(X_{\text{err}})}{\mu(\B_{r(1+\gamma)}(x))}$.
Further, for all $ z\in\B_r(x)$, we have  $\Ex_{z'\sim\B_{r\gamma}(z)}\indct{g(z')\neq y} = \frac{\mu(X_{\text{err}}\cap\B_{r\gamma}(z))}{\mu(\B_{r\gamma}(z))}$.

Let $z\in \B_r(x)$. Since this implies that $\B_{r\gamma}(z)\subseteq\B_{r(1+\gamma)}(x)$, the worst case happens when $X_{\text{err}}\subseteq\B_{r\gamma}(z)$. 
Therefore, 
\begin{align}
    \sigma_{g, y}(x) & = \Ex_{z'\sim\B_{r\gamma}(z)}\indct{g(z')\neq y} \\ 
    & = \frac{\mu(X_{\text{err}}\cap\B_{r\gamma}(z))}{\mu(\B_{r\gamma}(z))}\nonumber\\
    & \leq \frac{\mu(X_{\text{err}})}{\mu(\B_{r\gamma}(z))}\nonumber\\
    & \leq\frac{\Sigma_{g, y}(x)\cdot\mu(\B_{r(1+\gamma)}(x))}{\mu(\B_{r\gamma}(z))}\nonumber\\
    & \leq\Sigma_{g, y}(x)\cdot \left(\frac{1+\gamma}{\gamma}\right)^{\zeta d}\nonumber,
\end{align}
where the last inequality is implied by Lemma \ref{lemma:simple_doubling}.
Thus, $\Sigma(x)\leq\frac{1}{3}\cdot\left(\frac{1+\gamma}{\gamma}\right)^{-\zeta d}$ implies that $\sigma(z)\leq 1/3$ as claimed.

\section{Compression based bounds}\label{app_sec_compression}

\subsection{Proof of Lemma \ref{lem:tolerant_compression_generalization}}

To prove the generalization bound for tolerant learning, we employ the following lemma that establishes generalization for  compression schemes for adversarial losses:
\begin{lemma}[Lemma 11, \citep{MontasserHS19}\label{lem:compression_yields_learning}]
For any $k\in \naturals$ and fixed function $\rho: \bigcup_{i=1}^{k}(X\times Y)^i \to Y^X$, for any distribution $P$ over $X\times Y$ and any $m\in\naturals$, with probability at least $(1-\delta)$ over an \iid sample $S = ((x_1, y_1), (x_2, y_2), \ldots, (x_m, y_m))$: if there exist indices $i_1, i_2, \ldots, i_k$ such that
\[
\rLo{\U}{S}(\rho((x_{i_1}, y_{i_1}), (x_{i_2}, y_{i_2}), \ldots, (x_{i_k}, y_{i_k}))) = 0
\]
then the robust loss of the decompression with respect to $P$ is bounded by
\begin{align*}
& \rLo{\U}{P}(\rho((x_{i_1}, y_{i_1}), (x_{i_2}, y_{i_2}), \ldots, (x_{i_k}, y_{i_k}))) \\
& \qquad\qquad  ~\leq~  \frac{1}{m-k}(k \ln(m) + \ln(1/\delta))
\end{align*}

\end{lemma}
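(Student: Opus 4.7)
The plan is to prove this as the classical sample compression generalization bound, adapted to the adversarial loss $\rlo{\U}$. The central observation is that the adversarial loss $\rlo{\U}(h,x,y) = \max_{z \in \U(x)} \indct{h(z) \neq y}$ is still a per-sample quantity: it depends only on the hypothesis and on a single labeled point $(x,y)$. Consequently, the empirical robust loss on a fixed set of iid points is a sum of iid $\{0,1\}$-valued random variables, and the standard ``hold-out'' argument used for compression schemes goes through with the adversarial loss in place of the $0/1$ loss.

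The execution would proceed in three steps. \textbf{Step 1 (fix a selection of indices).} For every ordered tuple $I = (i_1, \ldots, i_k) \in [m]^k$, condition on the realization of the sub-sample $S_I = ((x_{i_1}, y_{i_1}), \ldots, (x_{i_k}, y_{i_k}))$. Because the original sample is iid, conditionally on $S_I$ the remaining points $S \setminus S_I$ (of size at least $m-k$) are still iid draws from $P$, and, crucially, are independent of the hypothesis $h_I := \rho(S_I)$, which is a deterministic function of $S_I$. \textbf{Step 2 (tail bound per tuple).} Define the bad event $E_I$ that $\rLo{\U}{P}(h_I) > \varepsilon^\ast$ yet $\rLo{\U}{S}(h_I) = 0$; since zero loss on $S$ forces zero loss on the held-out $m-k$ points, and since each held-out point independently has $\rlo{\U}(h_I, x, y) = 0$ with probability at most $1 - \varepsilon^\ast$, we obtain
\[
\Pr[E_I \mid S_I] \;\leq\; (1-\varepsilon^\ast)^{m-k} \;\leq\; e^{-\varepsilon^\ast (m-k)}.
\]
Taking expectation over $S_I$ preserves the same bound on $\Pr[E_I]$. \textbf{Step 3 (union bound and solve).} There are at most $m^k$ ordered tuples $I$, and the ``there exist indices $i_1,\ldots,i_k$'' clause in the statement is exactly a union over these tuples. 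Hence
\[
\Pr\!\left[\exists I : E_I \text{ holds}\right] \;\leq\; m^k \cdot e^{-\varepsilon^\ast (m-k)}.
\]
Setting the right-hand side equal to $\delta$ and solving gives $\varepsilon^\ast = \frac{1}{m-k}\bigl(k \ln m + \ln(1/\delta)\bigr)$, which is exactly the bound in the statement.

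The main thing to get right -- rather than a genuine obstacle -- is verifying that the iid ``validation set'' argument is valid for the adversarial loss. This reduces to two observations: (i) $\rlo{\U}(h,\cdot,\cdot)$ is a fixed deterministic function of a single labeled point once $h$ is fixed, so the $m-k$ held-out values $\rlo{\U}(h_I, x_j, y_j)$ are iid Bernoulli random variables with mean $\rLo{\U}{P}(h_I)$; and (ii) the compression function $\kappa$ only selects indices, so the held-out points are genuinely independent of $h_I$ after conditioning on $S_I$. A minor bookkeeping point is that I am using ordered tuples with repetition (size $m^k$) rather than unordered $k$-subsets; this is a slight over-count but matches the $k \ln m$ term in the bound and is the standard convention for compression schemes whose compression function may produce tuples (rather than sets) of samples.
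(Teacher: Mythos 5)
Your proposal is correct and is exactly the standard Littlestone--Warmuth-style argument underlying this result, which the paper imports from \citep{MontasserHS19} without proof: the key point you identify---that once $h_I$ is fixed, $\rlo{\U}(h_I,\cdot,\cdot)$ is a deterministic $\{0,1\}$-valued function of a single labeled example, so the held-out points form an independent Bernoulli validation set with mean $\rLo{\U}{P}(h_I)$---is precisely what lets the classical compression bound carry over to the adversarial loss. The only bookkeeping quibble is that $\rho$ accepts tuples of size at most $k$, so the union is over $\sum_{j\leq k} m^j$ rather than exactly $m^k$ tuples, a discrepancy that only perturbs the bound by a negligible additive constant.
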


The above lemma implies that if $(\kappa, \rho)$ is a compression scheme that compresses data-sets of size $m$ to at most $k\ln(m)$ data points, for class $\H$ and robust loss $\rlo{\U}$, then the sample complexity (omitting logarithmic factors) of robustly learning $\H$ in the realizable case is bounded by
\[
m(\varepsilon, \delta) = \tilde{O}\left(\frac{k + \ln(1/\delta)}{\varepsilon}\right)
\]
For the tolerant setting, since every sample that is realizable with respect to $\rlo{\V}$ is also realizable with respect to $\rlo{\U}$, if a $(\U,\V)$-tolerant compression scheme compresses to at most $k\ln(m)$ data-points and decompresses all $\rlo{\V}$-realizable samples $S$ to functions that have $\rlo{\U}$-loss $0$ on $S$, then the lemma  implies the above bound for the $(\U,\V)$-tolerant sample complexity of learning $\H$.

\subsection{ Proof of Lemma \ref{lem:tolerant_compression_bound}}
The proof of this Lemma employs the notions of a sample being $\varepsilon$-net or and $\varepsilon$-approximation for a hypothesis class $\H$. A labeled data set $S = ((x_1, y_1), (x_2, y_2), \ldots, (x_m, y_m))$ is an $\varepsilon$-net for class $\H$ with respect to distribution $P$ over $X\times Y$ if for every hypothesis $h\in\H$ with $\bLo{P}(h) \geq \varepsilon$, there exists an index $j$ and $(x_j, y_j)\in S$ with $h(x_j) \neq y_j$. $S$ is an $\varepsilon$-approximation for class $\H$ with respect to distribution $P$ over $X\times Y$ if for every hypothesis $h\in\H$ we have $|\bLo{S}(h) - \bLo{P}(h)| \leq \varepsilon$. Standard VC-theory tells us that, for classes with bounded VC-dimension, sufficiently large samples from $P$ are $\varepsilon$-nets or $\varepsilon$-approximations with high probability \citep{HausslerW87}.
 
\begin{proof}
We will employ a boosting-based approach to establish the claimed compression sizes. 
Let $S = ((x_1, y_1), (x_2, y_2), \ldots, (x_m, y_m))$ be a data-set that is $\rlo{\V}$-realizable with respect to $\H$. We let $S_{\V}$ denote an ``inflated data-set'' that contains all domain points in the perturbation sets of the $x_i\in S^X$, that is
\[
S_{\V}^X := \bigcup_{i=1}^{m} \V(x_i)
\]
Every point $z\in S_{\V}^X$ is assigned the label $y = y_i$ of the minimally-indexed $(x_i, y_i)\in S$ with $z\in \V(x_i)$, and we set $S_\V$ to be the resulting collection of labeled data-points. (Note that since the sample $S$ is assumed to be $\rlo{\V}$-realizable, assigning it the label of some other corresponding data point in case $z\in \V(x_i)\cap \V(x_j)$ for $x_i\neq x_j$, would not induce any inconsistencies).  Now let $D$ be the probability measure over $S_{\V}^X$ defined by first sampling an index $j$ uniformly from $[j] = \{1,2,\ldots, j\}$ and then sampling a domain point $z\sim \V(x_j)$ from the $\V$-perturbation set around the $j$-th sample point in $S$. Note that this implies 
that if $D(B) \leq (\beta/m)$ for some subset $B\subseteq S_{\V}^X$, then
\begin{equation}\label{eqn_weight_distribution}
\Pr_{z\sim \V(x)}[z\in B] \leq \beta    
\end{equation}
for all $x\in S^X$.

We will now show that, by means of a compression scheme, we can encode a hypothesis $g$ with binary loss 
\begin{equation}\label{loss_goal}
\bLo{D}(g) \leq \beta/m.
\end{equation}
Property \ref{assmt:UVWsmoothing} together with Equation \ref{eqn_weight_distribution}
then implies that the resulting $\W$-smoothed function $\bar{g}$ has $\U$-robust loss $0$ on the sample $S$, $\rLo{\U}{S}(\bar{g}) = 0$. Since the smoothing is a deterministic operation once $g$ is fixed, this implies the existence of a $(\U, \V)$-tolerant compression scheme.

Standard VC-theory tells us that, for a class $G$ of bounded VC-dimension, for any distribution over $X\times Y$, and any $\varepsilon,\delta >0$,  with probability at least $(1-\delta)$ an \iid sample of size $\Theta\left(\frac{\vc(G)+ \ln(1/\delta)}{\varepsilon^2} \right)$ is an $\varepsilon$-approximation for the class $G$ \citep{HausslerW87}. This implies in particular, that there exists a finite subset $S_{\V}^f \subset S_{\V}$ of size at most $\frac{4 m^2 C\cdot \vc(G)}{\beta^2}$ (for some constant $C$) with the property that any classifier $g\in G$ with empirical (binary) loss at most $\beta/2m$ on  $S_\V^f$ has loss $\bLo{D}(g) \leq \beta/m$ with respect to the distribution $D$. We will choose such a set  $S_\V^f$ for the class $G$ of $T$-majority votes over $\H$ for $T = 18\ln(\frac{2m}{\beta})$. That is
\begin{align*}
G = \{g\in Y^X  & \mid \exists h_1, h_2, \ldots, h_T \in H: \\
 & g(x) = \indct{\Sigma_{i=1}^T h_i(x) \geq 1/2}\}
\end{align*}
The VC-dimension of $G$ is bounded by~\citep{shalev2014understanding}
\begin{align*}
\vc(G) = \Ocal(T\cdot\vc(\H)\log(T\vc(\H))) = \Ocal(18\ln(\frac{2m}{\beta})\vc(\H)\log(18\ln(\frac{2m}{\beta})\vc(\H))).
\end{align*}

We will now show how to obtain the classifier $g$ by means of a boosting approach on the finite data-set $S_\V^f$. More specifically, we will use the boost-by-majority method. This method outputs a $T$-majority vote $g(x) = \indct{\Sigma_{i=1}^T h_i(x)} \geq 1/2$ over weak learners $h_i$, which in our case will be hypotheses from $\H$. After $T$ iterations with $\gamma$-weak learners,
the empirical loss over the sample $S_\V^f$ is bounded by $\e^{-2\gamma^2 T}$ (see Section 13.1 in \citep{schapire2013boosting}). Thus, with $\gamma =1/6$, and $T = 18\ln(\frac{2m}{\beta})$, we obtain
\[
\bLo{S_\V^f}(g) \leq \frac{\beta}{2m}
\]
which, by the choice of $S_\V^f$ implies
\[
\bLo{D}(g) \leq \beta/m
\]
which is what we needed to show according to Equation \ref{loss_goal}.

It remains to argue that the weak learners to be employed in the boosting procedure can be encoded by a small number of sample points from the original sample $S$. For this part, we will employ a technique introduced earlier for robust compression \citep{MontasserHS19}. Recall that the set $S$ is $\V$-robustly realizable, which implies that the set $S_\V^f$ is (binary loss-) realizable by $\H$. By standard VC-theory, for every distribution $D_i$ over $S_\V^f$, there exists an $\varepsilon$-net of size $\Ocal(\vc(\H)/\varepsilon)$ \citep{HausslerW87}. Thus, for every distribution $D_i$ over $S_\V^f$  (that may occur during the boosting procedure), there exists a subsample $S_i$ of $S_\V^f$, of size at most $n = \Ocal(3\vc(\H))$ with the property that every hypothesis from $\H$ that is consistent with $S_i$ has binary loss at most $1/3$ with respect to $D_i$ (thus can serve as a weak learner for margin $\gamma = 1/6$ in the above procedure). Now for every labeled point $(x,y)\in S_i$, there is a sample point $(x_j, y_j)\in S$ in the original sample $S$ such that $x\in \V(x_j)$ and $y = y_j$. Let $S'_i$ be the collection of these corresponding original sample points. Note that any hypothesis $h\in H$ that is $\V$-robustly consistent with $S'_i$ is consistent with $S_i$. Therefore we can use the $n$ original data-points in $S'_i$ to encode the weak learner $h_i$ (for the decoding any $\V$-robust ERM hypothesis can be chosen to obtain $h_i$).

To summarize, we will compress the sample $S$ to the sequence $S'_1, S'_2, \ldots S'_T$ of $n\cdot T = \Ocal(\vc(\H)\ln(\frac{m}{\beta}))$ sample points from $S$. To decode, we obtain the function $g$ as a majority vote over the weak learner $h_i$ and proceed to obtain the $\W$-smoothed function $\bar{g}$. This function $\bar{g}$ satisfies  $\rLo{\U}{S}(\bar{g}) = 0$ and by this we have established the existence of a $\U, \V$-tolerant compression scheme of size $\Ocal(\vc(\H)\ln(\frac{m}{\beta}))$ as claimed.
\end{proof}

\end{document}